\def\eqref#1{equation~\ref{#1}}
\def\1{\bm{1}}
\def\vmu{{\bm{\mu}}}
\def\vf{{\bm{f}}}
\def\vv{{\bm{v}}}
\def\vw{{\bm{w}}}
\def\vx{{\bm{x}}}
\def\mA{{\bm{A}}}
\def\mB{{\bm{B}}}
\DeclareMathAlphabet{\mathsfit}{\encodingdefault}{\sfdefault}{m}{sl}
\SetMathAlphabet{\mathsfit}{bold}{\encodingdefault}{\sfdefault}{bx}{n}
\newcommand{\R}{\mathbb{R}}
\newtheorem{mytheorem}{Theorem}
\newtheorem{lemma}{Lemma}
\newtheorem{definition}{Definition}
\title{NoiseDiffusion:\\Correcting Noise for Image
 Interpolation  with Diffusion Models beyond Spherical Linear Interpolation}
\author{Pengfei Zheng\textsuperscript{1} \quad
    Yonggang Zhang\textsuperscript{2} \quad
    Zhen Fang\textsuperscript{3} \quad
    Tongliang Liu\textsuperscript{4} \quad
    Defu Lian\textsuperscript{1}\thanks{Corresponding Author Defu Lian (liandefu@ustc.edu.cn)} \quad
    Bo Han\textsuperscript{2}
 \\
    \textsuperscript{1}University of Science and Technology of China\quad
    \textsuperscript{2}TMLR Group, Hong Kong Baptist University\\
    \textsuperscript{3}University of Technology Sydney\quad \textsuperscript{4}Sydney AI Centre, The University of Sydney\\
}
\begin{document}
\maketitle
\begin{abstract}
Image interpolation based on diffusion models is promising in creating fresh and interesting images. 
Advanced interpolation methods mainly focus on spherical linear interpolation, where images are encoded into the noise space and then interpolated for denoising to images. 
However, existing methods face challenges in effectively interpolating natural images (not generated by diffusion models), thereby restricting their practical applicability. 
Our experimental investigations reveal that these challenges stem from the invalidity of the encoding noise, which may no longer obey the expected noise distribution, e.g., a normal distribution. 
To address these challenges, we propose a novel approach to correct noise for image interpolation, \emph{NoiseDiffusion}. Specifically, NoiseDiffusion approaches the invalid noise to the expected distribution by introducing subtle Gaussian noise and introduces a constraint to suppress noise with extreme values. In this context, promoting noise validity contributes to mitigating image artifacts, but the constraint and introduced exogenous noise typically lead to a reduction in signal-to-noise ratio, i.e., loss of original image information. Hence, NoiseDiffusion performs interpolation within the noisy image space and injects raw images into these noisy counterparts to address the challenge of information loss. Consequently, NoiseDiffusion enables us to interpolate natural images without causing artifacts or information loss, thus achieving the best interpolation results. Our code is available at \url{https://github.com/tmlr-group/NoiseDiffusion}.

\end{abstract}

\begin{figure}[ht]
\begin{center}
\includegraphics[width=11.5cm]{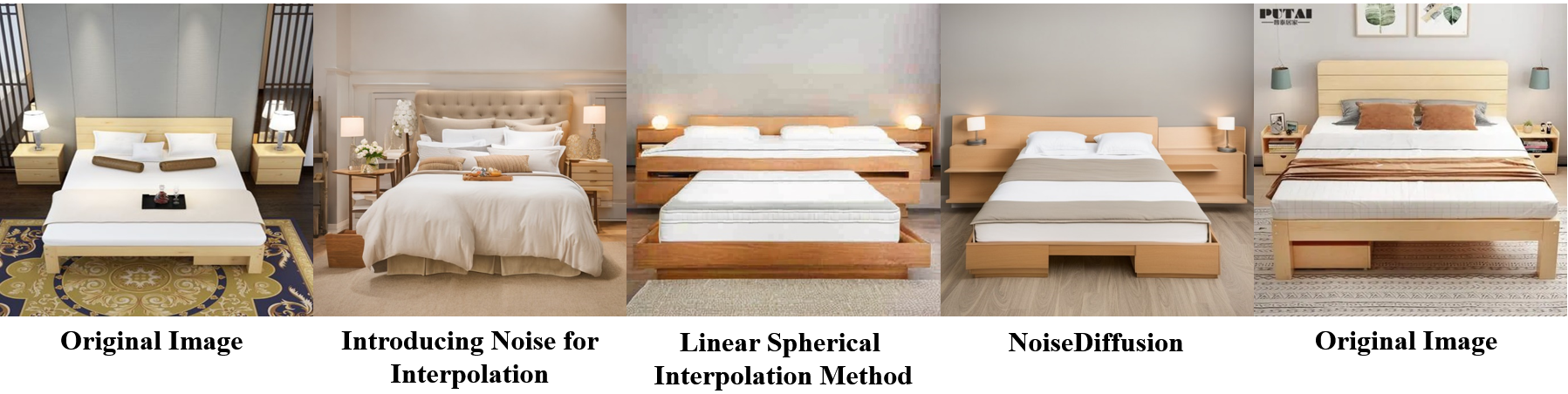}
\end{center}
\caption{Comparison of images generated with different interpolation methods. }
\label{figure31}
\end{figure}

\section{Introduction}
Image interpolation is an exceptionally fascinating task, not only for generating analogous images but also for igniting creative applications, especially in domains like advertising and video generation. At present, state-of-the-art generative models showcase the ability to produce intricate and captivating visuals, with many recent breakthroughs deriving from diffusion models \citep{ho2020denoising, song2020denoising, Rombach_2022_CVPR, saharia2022photorealistic, ramesh2022hierarchical}. The potential of diffusion models is widely acknowledged, but to our knowledge, there has been relatively little research on image interpolation with diffusion models \citep{croitoru2023diffusion}.

Within the realm of diffusion models, the prevailing technique for image interpolation is spherical linear interpolation \citep{song2020denoising, song2020score}. This approach shines when employed with images generated by diffusion models. However, when extrapolated to natural images, the quality of interpolation results might fall short of expectations and frequently introduce artifacts, as depicted in Figure~\ref{figure15}.

We initially analyze the spherical linear interpolation process and attribute subpar interpolation results to the invalidity of the encoding noise. This noise does not obey the expected normal distribution and may contain noise components at levels higher or lower than the denoising threshold\footnote{Noise level of denoising}, resulting in artifacts in the final interpolated images. Directly manipulating the mean and variance of the noise through translation and scaling is a straightforward approach to bring it closer to the desired distribution. However, this not only fails to improve the image quality but also results in the loss of image information. In addition, combined with the SDEdit method\citep{meng2021sdedit}, we directly introduce standard Gaussian noise for interpolation. While this method improves the quality of images, it comes at the expense of introducing additional information, as depicted in Figure~\ref{figure14}.


To improve the interpolation results, we propose a novel approach to correct noise for image interpolation, \emph{NoiseDiffusion}. Specifically, NoiseDiffusion  approaches the invalid noise to the expected distribution by introducing subtle Gaussian noise and introduces a constraint to suppress noise with extreme values. In this context, promoting noise validity contributes to mitigating image artifacts, but the constraint and  introduced exogenous noise  typically lead to a reduction in signal-to-noise ratio, i.e., loss of original image information. Hence, NoiseDiffusion subsequently performs interpolation in the noisy image space and injects raw images into these noisy images to tackle the information loss issue. These enhancements enable us to interpolate with natural images without artifacts,  yielding the best interpolation results achieved to date. Considering the limited exploration of previous research in this field \citep{croitoru2023diffusion}, we hope that our research can provide inspiration for future research.

\section{Related Work}
\textbf{Diffusion Models} Diffusion models create samples from the Gaussian noise using sequential denoising steps. To date, diffusion models have been applied to various tasks, including image generation \citep{Rombach_2022_CVPR, song2020improved, nichol2021glide,jiang2022text2human}, image super-resolution \citep{saharia2022image, batzolis2021conditional,daniels2021score}, image inpainting \citep{esser2021imagebart}, image editing \citep{meng2021sdedit}, and image-to-image translation \citep{saharia2022palette}. In particular, latent diffusion models \citep{Rombach_2022_CVPR} excel in generating text-conditioned images, receiving widespread acclaim for their ability to produce realistic images.

\textbf{Image Interpolation}  Earlier approaches, such as StyleGAN \citep{karras2019style}, allowed for interpolation using the latent variables of images. However, their effectiveness is constrained by the model's ability to represent only a subset of the image manifold, presenting challenges when applied to natural images \citep{xia2022gan}. What's more, latent diffusion models can  utilize prompts to interpolate the generated images (like Lunarring), but its interpolation potential on natural images has not yet been discovered. To the best of our knowledge, a method for interpolating natural images using latent variables  with diffusion models has not been encountered.


\section{Preliminaries}
In this section, we first introduce how to describe the diffusion model's noise injection and denoising process in the form of stochastic differential equations (SDEs). Building upon this, we provide a brief overview of how diffusion models are used for image interpolation and editing. Through image editing, we can implement an interpolation method that doesn't require latent variables, that is, introducing Gaussian noise and then denoising. These methods form the foundation of the proposed approach, NoiseDiffusion.


\subsection{The details of diffusion models}

\textbf{Perturbing  Data With SDEs}  \citep{song2020score} 
We denote the distribution of training  data as $p_{\mathrm{data}}(\vx)$, and the Gaussian perturbations applied to $p_{\mathrm{data}}(\vx)$ by the diffusion model can be described by the following stochastic differential equation expression:
\begin{equation}
d \vx_t=\vmu(\vx_t, t)dt+\sigma(t)d\vw_t, 
\end{equation}
where $t\in[0, T], T>0$ is a fixed constant, ${{\{\vw}_t}\}_{t\in [0, T]}$ denotes  the standard Wiener process (a.k.a., Brownian motion), $\vmu(\cdot, \cdot):\R^d\rightarrow \R^d$ is a vector-valued function called the drift coefficient of $\vx_t$, and $\sigma(\cdot):\R\rightarrow\R$ is a scalar function known as the diffusion coefficient.

We denote the distribution of $\vx_t$ as $p_t(\vx_t)$ and consequently, $p_0$ represents for the training data distribution $p_{\mathrm{data}}$ and $p_T$ is an unstructured prior distribution that contains no information of $p_0$.

\textbf{Generating Samples By Reversing the SDEs}  \citep{song2020score}
By starting from samples of $p_T$ and reversing the perturbation process, we can obtain samples $\vx_0\sim p_0$. The reverse of a diffusion process is also a diffusion process  and can be  given by the reverse-time SDE \citep{anderson1982reverse}:
\begin{equation}
d\vx=[\vmu(\vx_t, t)-\sigma(t)^2\nabla\log p_t(\vx_t)]dt+\sigma(t)d\bar{\vw}, \label{eq:reverse-SDE}
\end{equation}
where $\bar{\vw}$ is a standard Wiener process when time flows backwards from $T$ to $0$, and $dt$ is an infinitesimal negative timestep. Once the score of each marginal distribution, $\nabla\log p_t(\vx)$, is known for all $t$, we can derive the reverse diffusion process from Eq.\ref{eq:reverse-SDE} and simulate it to sample from $p_0$. And methods like  stochastic Runge-Kutta \citep{kloeden1992stochastic} methods can be used to solve this.

\textbf{Probability Flow ODE}  \citep{song2020score}
Diffusion models enable another numerical method for solving the reverse-time SDE. For all  diffusion processes, there exists a corresponding deterministic process whose trajectories share the same marginal probability densities $\{p_t(\vx_t)\}_{t=0}^T$ as the SDE. This deterministic process satisfies an ordinary differential equation (ODE) :
\begin{equation}
d\vx_t=[\vmu(\vx_t, t)-\frac{1}{2}\sigma(t)^2 \nabla \log{p_t(\vx_t)} ]dt,\label{PF ODE}
\end{equation}
which can be determined from the SDE once scores are known. Usually we call the ODE in Eq.\ref{PF ODE} the probability flow ODE.

\subsection{Image editing}
\textbf{Spherical Linear Interpolation} In diffusion models, the prevailing image interpolation method is spherical linear interpolation \citep{song2020denoising, song2020score}:
$$\vx_T^{(\lambda)}=\frac{\sin((1-\lambda)\theta)}{\sin (\theta)}\vx_T^{(0)}+\frac{\sin(\lambda\theta)}{\sin(\theta)}\vx_T^{(1)},$$
where $\theta=\arccos(\frac{(\vx_T^{(0)})^{\intercal}\vx_T^{(1)}}{\parallel\vx_T^{(0)}\parallel \parallel\vx_T^{(1)}\parallel})$, and $\lambda$ is a coefficient that controls interpolation style between two images. $\vx_T^{(i)}$ can be either a noisy image encoded from image $\vx_0^{(i)}$ by integrating Eq.\ref{PF ODE}, or randomly sampled standard Gaussian noise. After completing the interpolation of latent variables through the above equation, decoding can be achieved by integrating the corresponding ODE for the reverse-time SDE. In the rest of the paper, we use \verb|slerp|$(\vx_t^{(0)}, \vx_t^{(1)}, \lambda)$ to denote the spherical linear interpolation of the latent variables $\vx_t^{(0)}$ and $\vx_t^{(1)}$ with the interpolation coefficient $\lambda$.

\textbf{Image Editing with SDEdit \citep{meng2021sdedit}} The SDEdit accomplishes image modifications by overlaying the desired alterations onto the image, introducing noise, and subsequently denoising the composite. This process ensures that the resulting image maintains a high level of quality. For any given image $\vx_0$, the SDEdit procedure is defined as follows:
$$\text{Sample}\ \vx_t \sim  \mathcal{N}(\vx_0;\sigma^2(t_0)\bm{I}), \text{then produce } \hat{\vx}_0 \text{ by solving 
 Eq.}\ref{eq:reverse-SDE}.$$
For appropriately trained SDE models, a trade-off between realism and faithfulness emerges when varying the values of $t_0$. When we add more Gaussian noise and run the SDE for longer, the synthesized images are more realistic but less faithful. Conversely, adding less Gaussian noise and running the SDE produces synthesized images that are more faithful but less realistic.


\section{The Image Interpolation Methods}

\begin{figure}
\centering
\includegraphics[width=13.5cm]{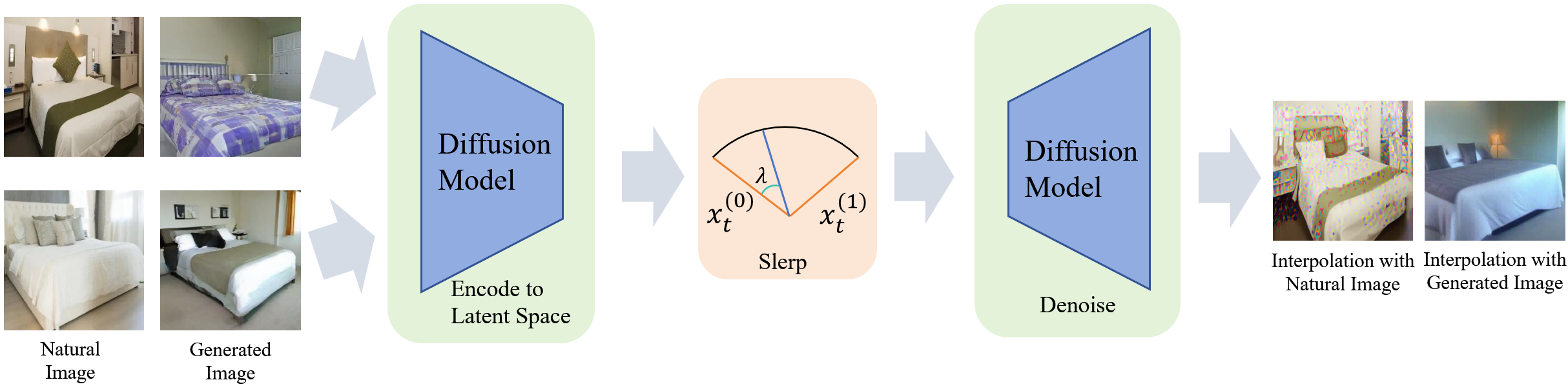}
\caption{The spherical linear  interpolation. Original images: The images on the left are natural images, whereas the images on the right are generated by the diffusion model. Interpolation results: The images on the left and right are the interpolation results of natural images and images generated by diffusion model respectively.}
\label{figure15}
\end{figure}


\subsection{The  spherical linear interpolation of images}
Let's start by introducing the process of spherical linear interpolation of images. Given two images, the initial step involves encoding them into a latent space, i.e., Eq.~\ref{eq1} and \ref{eq2}. Then, we can perform spherical linear interpolation on the latent variables, i.e., Eq.~\ref{eq3}, followed by denoising to generate the interpolation results with Eq.~\ref{eq4}.

\begin{equation} \label{eq1}
\vx^{(0)}_t=\vf(\vx^{(0)}_0, t),
\end{equation}
\begin{equation} \label{eq2}
\vx^{(1)}_t=\vf(\vx^{(1)}_0, t),
\end{equation}
\begin{equation} \label{eq3}
\vx_t=\verb|slerp|(\vx_t^{(0)}, \vx_t^{(1)}, \lambda),
\end{equation}
\begin{equation}\label{eq4}
\hat{\vx}_0=\vf^{-1}(\vx_t, t).
\end{equation}

In this context, we denote the Gaussian noise as $\epsilon_t\sim\mathcal{N}(\mathbf{0}, \sigma(t)^2\bm{I})$ and the original image as $\vx_0^{(i)}$ with $i \in \{0, 1\}$, respectively. Accordingly, $\vx_t^{(i)}$ represents the noisy image corresponding to the variable of the image in the latent space with noise level $\sigma(t)$. Using the probability flow ODE for its stability and unique encoding capabilities, we encode $\vx_0$ into the latent space by integrating Eq.$\ref{PF ODE}$, and we denote this encoding process as a function $\vf$. Similarly, we denote the decoding process as $\vf^{-1}$, which corresponds to denoising through the ODE associated with the reverse-time SDE.

\begin{wrapfigure}{l}{10.2cm}
\centering
\includegraphics[width=10.2cm]{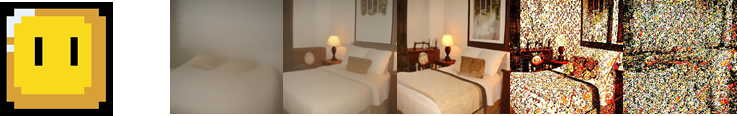}
\caption{The impact of noise levels. We added Gaussian noise with levels of  $\sigma(t) = [70, 75, 80, 85, 90]$ to the image on the left. Subsequently, we applied denoising to each noisy image with the same noise level of $\sigma(t') = 80$, resulting in the denoised images on the right.}
\label{figure16}
\end{wrapfigure}

Examining Figure \ref{figure15}, we notice that the interpolation result derived from natural images (not generated from diffusion model) displays noticeable  artifacts, contrasting with the one derived from images generated by the diffusion model, which is free from such imperfections.

\subsection{The reason for failure}
To explore what kind of potential variables can be better denoised, we add Gaussian noise to the image at various noise levels $\sigma(t)$, resulting in $\vx_t = \vx_0 + \epsilon_t$, and then denoise them at the same noise level $\sigma(t')$, yielding $\hat{\vx}_0 = \vf^{-1}(\vx_t, t')$. The results are shown in Figure \ref{figure16}.

Based on the results depicted in Figure \ref{figure16}, we observe that adding Gaussian noise matching the denoising level produces high-quality images. However, when the noise level exceeds the denoising threshold, additional artifacts are introduced in the generated images. Conversely, when the noise level falls below the denoising threshold, the resulting images appear somewhat blurred, accompanied by a noticeable loss of features.

This phenomenon is rather peculiar since, in the context of a Gaussian distribution, points closer to the mean typically exhibit higher probability density. In other words, within the framework of the diffusion model, noisy images with lower noise levels (closer to the mean) should ideally be more effectively denoised. Building upon these observations, we introduce Theorem \ref{theorem1} to provide an explanation for this phenomenon:

\begin{mytheorem}\label{theorem1}
The standard normal distribution $\mathcal{N}(\mathbf{0},\bm{I}_n)$ in high dimensions is close to
the uniform distribution on the sphere of radius $\sqrt{n}$. 
\end{mytheorem}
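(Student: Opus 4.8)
The plan is to decompose a standard Gaussian vector $\vx \sim \mathcal{N}(\mathbf{0}, \bm{I}_n)$ into its radial and angular parts and to show that the radial part concentrates sharply around $\sqrt{n}$ while the angular part is \emph{exactly} uniform. Write $\vx = R\,\bm{u}$ with $R = \|\vx\|$ and $\bm{u} = \vx/\|\vx\| \in S^{n-1}$. By the rotational invariance of $\mathcal{N}(\mathbf{0}, \bm{I}_n)$ (for any orthogonal $\bm{Q}$, $\bm{Q}\vx$ has the same law as $\vx$), the direction $\bm{u}$ is distributed uniformly on the unit sphere, and $R$ and $\bm{u}$ are independent. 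Thus the only discrepancy between $\mathcal{N}(\mathbf{0}, \bm{I}_n)$ and the uniform distribution on the sphere of radius $\sqrt{n}$ is that $R$ is random rather than the constant $\sqrt{n}$.

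The next step is to quantify the concentration of $R$. Since $R^2 = \sum_{i=1}^n \ervx_i^2$ is a sum of $n$ i.i.d. random variables with mean $1$ and variance $2$, we have $\E[R^2] = n$ and $\Var(R^2) = 2n$; the summands are sub-exponential, so a Bernstein-type inequality (equivalently, the Laurent--Massart $\chi^2$ tail bounds) gives $\Pr\!\big[\,|R^2 - n| \ge t\sqrt{n}\,\big] \le 2e^{-c t^2}$ for $t$ up to order $\sqrt{n}$. Converting from $R^2$ to $R$ via $|R - \sqrt{n}| = |R^2 - n|/(R + \sqrt{n}) \le |R^2-n|/\sqrt{n}$ on the same event, we conclude that with probability at least $1-\delta$ the vector $\vx$ lies in the thin spherical shell $\big\{\, \big|\,\|\vx\| - \sqrt{n}\,\big| \le C\sqrt{\log(1/\delta)} \,\big\}$, whose width does not grow with $n$.

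Finally I would assemble these facts into a quantitative closeness statement. Because total variation is useless here — a measure with a density on $\R^n$ is mutually singular with any measure supported on a sphere — the right notion is a transportation distance: couple $\vx = R\bm{u}$ with $\vy = \sqrt{n}\,\bm{u}$ using the \emph{same} uniform direction $\bm{u}$, so that $\|\vx - \vy\| = |R - \sqrt{n}|$, and take expectations to bound $W_2\big(\mathcal{N}(\mathbf{0}, \bm{I}_n),\, \mathrm{Unif}(\sqrt{n}\,S^{n-1})\big) \le \big(\E|R-\sqrt{n}|^2\big)^{1/2} = O(1)$. Since the sphere has radius $\sqrt{n}$ and typical pairwise distances of order $\sqrt{n}$, an $O(1)$ transportation cost is vanishingly small on the natural scale, which is the precise sense in which the two distributions are close; one may also phrase the conclusion purely as the ``thin shell'' statement that all but an $e^{-\Omega(t^2)}$ fraction of the Gaussian mass lies within distance $t$ of the sphere of radius $\sqrt n$.

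The main obstacle is conceptual rather than computational: pinning down the metric in which the statement is true (and false in TV) and tracking the $\sqrt{n}$ versus $O(1)$ scales so the bound is meaningful. The concentration estimate for $R^2$ is standard; the only technical care needed is the transfer from $R^2$ to $R$ in the lower tail — ensuring $R + \sqrt{n}$ is bounded below — which the same high-probability event handles, together with invoking rotational invariance cleanly to get that $\bm{u}$ is exactly uniform and independent of $R$.
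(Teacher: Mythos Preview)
Your proposal is correct and follows essentially the same route as the paper: polar decomposition $g=r\theta$, a Bernstein-type bound on $\|g\|_2^2-n=\sum_i(X_i^2-1)$ to get $\Pr\{|\|g\|_2-\sqrt{n}|\ge t\}\le 2e^{-ct^2}$, and then the conclusion $g\approx\sqrt{n}\,\theta\sim\mathrm{Unif}(\sqrt{n}\,S^{n-1})$. The paper stops at the informal ``$\approx$'' after the thin-shell bound, whereas you go one step further and make closeness precise via a $W_2$ coupling; that extra step is a welcome addition but not a different argument.
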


The detailed proof process of Theorem \ref{theorem1} can be found in Appendix \ref{theorem1:appendix}. Theorem \ref{theorem1} indicates that random variables following the standard normal distribution in high dimensions are primarily distributed on a hypersphere. This is because, as we approach the mean, the probability density increases, but the volume in high-dimensional space gradually expands as we move away from the mean. This result neatly explains why only noisy images with noise levels matching the denoising threshold can produce high-quality results after denoising: During the training process, the model can only observe noisy images primarily reside on the hypersphere. Consequently, it can only effectively recover images of this nature.

Building upon Theorem \ref{theorem1}, we can attribute the failure of  spherical linear image interpolation to the mismatch between noise levels and denoising threshold. The natural images  encompass numerous features that the model has not previously encountered. Consequently, the latent variables do not obey the expected normal distribution, and  may contain noise components at levels higher or lower than the denoising threshold, resulting in low image quality after denoising. Inspired by SDEdit, we can directly introduce Gaussian noise to the images as a solution to this mismatch problem. Details are as follows.


\subsection{Introducing noise for interpolation}
Here, we introduce the image interpolation method combined with SDEdit. When given two images, the method starts by introducing Gaussian noise at the same level to each of them. Following this, we employ  spherical  linear interpolation  and subsequently apply denoising:
\begin{equation}
    \vx_t^{(0)}=\vx^{(0)}_0+\epsilon_t, 
\end{equation}
\begin{equation}
    \vx_t^{(1)}=\vx^{(1)}_0+\epsilon_t, 
\end{equation}
\begin{equation}
\vx_t=\verb|slerp|(\vx_t^{(0)}, \vx_t^{(1)}, \lambda),
\end{equation}
\begin{equation}
\hat{\vx}_0=\vf^{-1}(\vx_t, t).
\end{equation}

The noise added to the images can be either the same or different. Shortly, we will demonstrate that they exhibit only minor distinctions. However, it is crucial to emphasize that since this image interpolation method is based on SDEdit, it unavoidably inherits the drawbacks of the SDEdit method, as illustrated in Figure~\ref{figure14}.

The interpolation results presented in Figure \ref{figure14} indicate that the method can address the issue of poor image quality. However, when we add more Gaussian noise and denoise, the interpolated images, while maintaining the original style,  exhibit a phenomenon resembling direct image overlay. Conversely, selecting less Gaussian noise and denoising, while ensuring realistic images, introduces additional information, ultimately resulting in interpolation failure.

\begin{figure}[ht]
\begin{center}
\includegraphics[width=12cm]{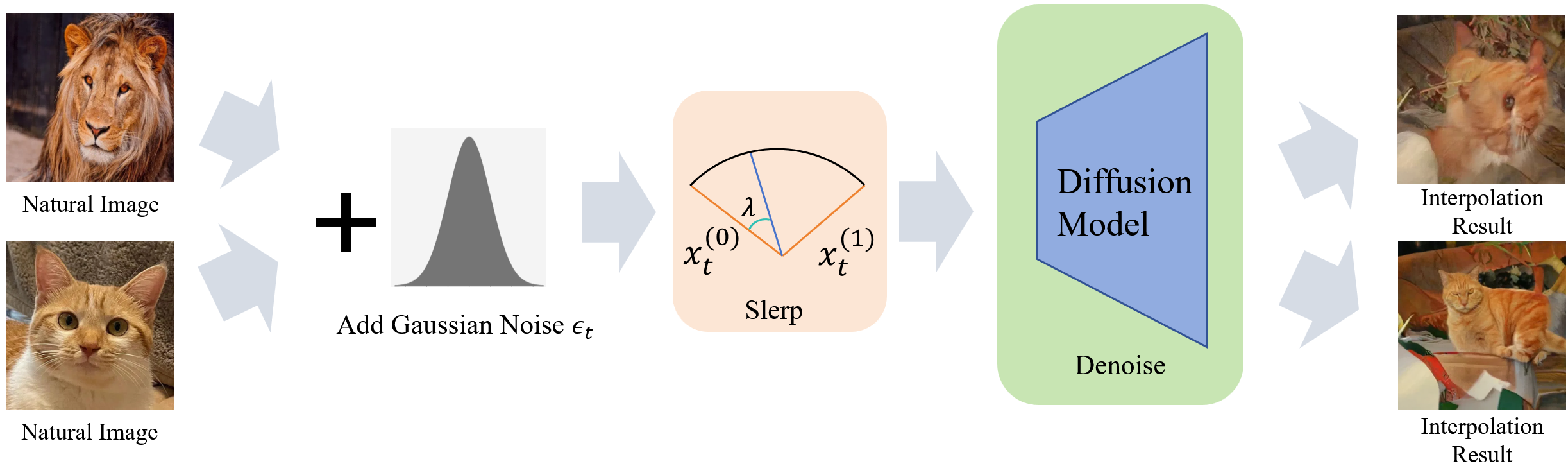}
\end{center}
\caption{Introducing noise for image interpolation. In the interpolated images, the top one represents the interpolation result with less Gaussian noise, while the bottom one represents the interpolation result with more Gaussian noise.}
\label{figure14}
\end{figure}

\subsection{NoiseDiffusion}

Based on the experimental results above, we can conclude the following: when spherical linear interpolation is directly applied to natural images, the resulting images can better preserve the original features but may contain artifacts. Conversely, directly introducing noise for image interpolation may yield high-quality images but often causes the information loss issue. To integrate these two methods, we propose the following theorem.

\begin{mytheorem}\label{theorem2}
In high-dimensional spaces, independent and isotropic random vectors tend to be almost orthogonal.
\end{mytheorem}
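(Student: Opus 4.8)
The plan is to make the statement quantitative: if $\vx$ and $\vy$ are independent random vectors in $\R^n$, each isotropic (meaning $\E[\vx\vx^\intercal] = \bm{I}_n$, so in particular $\E\|\vx\|^2 = n$), then the normalized inner product $\frac{\vx^\intercal\vy}{\|\vx\|\,\|\vy\|}$ concentrates near $0$ as $n\to\infty$, equivalently the angle between them concentrates near $\pi/2$. First I would condition on $\vy$ and study $\vx^\intercal\vy$: by isotropy and independence, $\E[\vx^\intercal\vy \mid \vy] = 0$ and $\E[(\vx^\intercal\vy)^2 \mid \vy] = \vy^\intercal \E[\vx\vx^\intercal]\, \vy = \|\vy\|^2$. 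Taking expectations over $\vy$ as well gives $\E[(\vx^\intercal\vy)^2] = \E\|\vy\|^2 = n$. So $|\vx^\intercal\vy|$ is typically of order $\sqrt{n}$.

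Next I would control the denominator. Under a mild moment/concentration assumption on the coordinates (e.g.\ the model of Theorem~\ref{theorem1}, where the coordinates are i.i.d.\ with unit variance and light tails), $\|\vx\|^2$ concentrates around $n$: $\|\vx\|^2 = n + O_P(\sqrt{n})$, and likewise for $\|\vy\|$. Hence $\|\vx\|\,\|\vy\| = n\,(1 + o_P(1))$. Combining with the numerator bound via Chebyshev — $\Pr\!\big(|\vx^\intercal\vy| \ge t\sqrt{n}\big) \le 1/t^2$ — I get
\begin{equation}
\frac{|\vx^\intercal\vy|}{\|\vx\|\,\|\vy\|} = O_P\!\left(\frac{1}{\sqrt{n}}\right),
\end{equation}
so the cosine of the angle vanishes in probability at rate $n^{-1/2}$, which is the precise sense in which independent isotropic vectors are asymptotically orthogonal. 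For the specific case of standard Gaussian vectors one can say more: $\vx/\|\vx\|$ is uniform on the sphere, so conditionally on $\vy$ the quantity $\vx^\intercal\vy/\|\vy\|$ has the distribution of a single coordinate of $\sqrt{n}$ times a uniform point on $S^{n-1}$, which is sub-Gaussian with parameter $O(1)$, giving the sharper tail $\Pr(|\cos\angle(\vx,\vy)| \ge \epsilon) \le 2e^{-c n \epsilon^2}$; I would at least remark on this, and it dovetails with Theorem~\ref{theorem1} since it lets me replace "isotropic" by "uniform on the sphere" at negligible cost.

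The main obstacle is that the statement as written is not fully precise: "isotropic" needs to be pinned down (I will take $\E[\vx\vx^\intercal]=\bm{I}_n$), and "almost orthogonal" needs a mode of convergence. The numerator estimate is a one-line second-moment computation and is completely robust; the real work — though still routine — is the denominator concentration $\|\vx\| \approx \sqrt{n}$, which genuinely needs more than isotropy (some tail control on $\|\vx\|^2$, or a fourth-moment bound on coordinates), so I would either invoke Theorem~\ref{theorem1}'s i.i.d.\ light-tailed coordinate model or add such a hypothesis explicitly. Once both pieces are in hand, the conclusion follows by a union bound over the two bad events (numerator too large, denominator too small), and no delicate estimate is required.
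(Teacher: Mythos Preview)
Your proposal is correct and follows essentially the same route as the paper: compute $\E\langle\vx,\vy\rangle^2 = n$ by conditioning on one vector and using isotropy, pair this with $\|\vx\|,\|\vy\|\approx\sqrt{n}$, and conclude $|\cos\angle(\vx,\vy)|\asymp n^{-1/2}$. If anything you are more careful than the paper, which simply asserts the concentration step from the second-moment identities, whereas you correctly flag that the denominator bound requires an extra tail assumption (supplied by Theorem~\ref{theorem1}) and make the numerator bound explicit via Chebyshev.
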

The detailed proof process of Theorem \ref{theorem2} can be found in Appendix \ref{theorem2:appendix}. Based on Theorem \ref{theorem1} and Theorem \ref{theorem2}, we proposed a new image interpolation method called NoiseDiffusion: Given two images, we begin by encoding them into the latent space and clip them to suppress noise with extreme values. Next, we synthesize the latent variables with Gaussian noise, combining them with the original images, and finally apply clipping and denoising to produce the interpolation results:
\begin{equation}
\vx^{(0)}_t=\verb|clip|(\vf(\vx^{(0)}_0, t)),
\end{equation}
\begin{equation}
\vx^{(1)}_t=\verb|clip|(\vf(\vx^{(1)}_0, t)),
\end{equation}
\begin{equation}
\vx_t=\alpha*\vx^{(0)}_t+\beta*\vx^{(1)}_t+(\mu-\alpha)*\vx^{(0)}_0+(\nu-\beta)*\vx^{(1)}_0+\gamma*\epsilon_t,
\end{equation}
\begin{equation}
\hat{\vx}_0=\vf^{-1}(\verb|clip|(\vx_t), t). 
\end{equation}

In these equations, $\alpha$ and $\beta$ correspond to coefficients for image style, while $\mu$ and $\nu$ serve as compensation coefficients to adjust the amount of original image information. Additionally, $\gamma$ represents the lubrication coefficient, which can be used to adjust the amount of noise to enhance image quality.

Ensuring that the formula $\sqrt{\alpha^2+\beta^2+\gamma^2}=1$ is satisfied is crucial. Drawing from Theorem \ref{theorem1} and Theorem \ref{theorem2}, we can infer that for any three high-dimensional vectors on a hypersphere with a radius of $\|r\|$, denoted as $\vv_1$, $\vv_2$ and $\vv_3$, the magnitude of the weighted sum $\vv_{12}$ is given by $\|\vv_{12}\|=\|\alpha\cdot\vv_1+\beta\cdot\vv_2\|=\sqrt{\alpha^2\|\vv_1\|^2+\beta^2\|\vv_2\|^2+2\|\vv_1\|\|\vv_2\|\cos{\theta}}=\sqrt{\alpha^2+\beta^2}\|r\|$. Moreover, it is worth noting that the newly obtained vector $\vv_{12}$ and the vector  $\vv_3$  also remain orthogonal. Consequently, we can represent the magnitude of the weighted sum of these vectors as: $\|\alpha\cdot\vv_1+\beta\cdot\vv_2+\gamma\cdot\vv_3\|=\sqrt{\alpha^2+\beta^2+\gamma^2}\|r\|$. While the denoised image in Figure $\ref{figure15}$ displays some artifacts, the majority of its content remains clear. This observation implies that the latent variables of natural images $\vv_1=\vx^{(0)}_t$, $\vv_2=\vx^{(1)}_t$ also tend to be near the hypersphere. Therefore, considering that Gaussian noise $\vv_3=\epsilon_t$ also resides on the hypersphere, it is crucial to maintain the formula $\sqrt{\alpha^2+\beta^2+\gamma^2}=1$ to ensure that the final synthesized latent variable also possesses the same properties.

\subsection{Boundary control}

According to the widely recognized statistical principle known as the \textbf{empirical rule} (also known as \textbf{68–95–99.7 rule}) \citep{pukelsheim1994three}, which pertains to the behavior of data within a normal distribution, approximately $99.7\%$ of data points are located within three standard deviations from the mean. Consequently, considering our analysis of how noise above the denoising threshold impacts images, data points exhibiting significant deviations from the mean are considered potential sources of image artifacts, a hypothesis that will be validated in subsequent experiments. To mitigate their influence, we employ the following boundary control (clip) procedure :
\begin{align*}
\text{Pixel Value} = \begin{cases}
\text{Boundary,}  & \text{if Pixel Value} > \text{Boundary,} \\
-\text{Boundary,}  & \text{if Pixel Value} < -\text{Boundary,} \\
\text{Pixel Value,} & \text{otherwise.}
\end{cases}
\end{align*}



\subsection{The connection of methods}

Here, we establish the relationship between our approach and other methods, highlighting the advantages of our approach. To begin with, our method, when coupled with appropriate parameter choices, can be adapted into two other methods:

\textbf{Spherical Linear Interpolation} 
Combining Theorem \ref{theorem2}, as high-dimensional random vectors are orthogonal, we can express  spherical linear interpolation in the following form with $\theta=\frac{\pi}{2}$:
$$
\vx_t^{(\lambda)}=\frac{\sin((1-\lambda)\theta)}{\sin (\theta)}\vx_t^{(0)}+\frac{\sin(\lambda\theta)}{\sin(\theta)}\vx_t^{(1)}=\sin((1-\lambda)\cdot\frac{\pi}{2})\vx_t^{(0)}+\sin(\lambda\cdot\frac{\pi}{2})\vx_t^{(1)}.$$

This is equivalent to our method with $\gamma=0$, $\mu=\alpha=\sin((1-\lambda)\cdot\frac{\pi}{2})$, and $\nu=\beta=\sin(\lambda\cdot\frac{\pi}{2})$.

\textbf{Introducing Noise for Interpolation}  We classify the method of introducing noise for image interpolation into two categories, assuming that the noise level is substantially higher than that of the image, which is often the common case. In this scenario, we can show that our approach can be adapted into this method:
\begin{enumerate}
        \item The noise added to the image is the same:
    \begin{align*}
    \notag &\vx_t=\text{\texttt{slerp}}(\vx_t^{(0)}, \vx_t^{(1)}, \lambda)=\frac{\sin((1-\lambda)\cdot0)}{\sin (0)}\vx_t^{(0)}+\frac{\sin(\lambda\cdot0)}{\sin(0)}\vx_t^{(1)} \\
    &=(1-\lambda)\vx_t^{(0)}+\lambda\vx_t^{(1)}=(1-\lambda)\vx_0^{(0)}+\lambda\vx_0^{(1)}+(1-\lambda+\lambda)\epsilon_t \\
    &=(1-\lambda)\vx_0^{(0)}+\lambda\vx_0^{(1)}+\epsilon_t .
    \end{align*}
    This is equivalent to our method with $\alpha=\beta=0, \mu=1-\lambda, \nu=\lambda.$
   
    \item  The noise added to the image is different:
    \begin{align*}
    \notag & \vx_t=\text{\texttt{slerp}}(\vx_t^{(0)}, \vx_t^{(1)}, \lambda)=\frac{\sin((1-\lambda)\cdot\frac{\pi}{2})}{\sin (\frac{\pi}{2})}\vx_t^{(0)}+\frac{\sin(\lambda\cdot\frac{\pi}{2})}{\sin(\frac{\pi}{2})}\vx_t^{(1)}  \\
    &=\sin((1-\lambda)\cdot\frac{\pi}{2})\vx_0^{(0)}+\sin(\lambda\cdot\frac{\pi}{2})\vx_0^{(1)}+\sin((1-\lambda)\cdot\frac{\pi}{2})\epsilon_t^{(0)}+\sin(\lambda\cdot\frac{\pi}{2})\epsilon_t^{(1)} \\
    &=\sin((1-\lambda)\cdot\frac{\pi}{2})\vx_0^{(0)}+\sin(\lambda\cdot\frac{\pi}{2})\vx_0^{(1)}+\epsilon_t'.
    \end{align*}
    This is equivalent to our method with $\alpha=\beta=0, \mu=\sin((1-\lambda)\cdot\frac{\pi}{2}), \nu=\sin(\lambda\cdot\frac{\pi}{2}).$ 
\end{enumerate}

Compared with spherical linear interpolation, our method introduces Gaussian noise to better position latent variables on the hypersphere. In contrast to the approach of introducing noise for interpolation, our method incorporates noise correction, which enables us to position latent variables on the hypersphere and remove artifacts with a smaller amount 
 of Gaussian noise.

\section{Experiments}

The SDE is typically designed such that $p_{T}$ is close to a tractable Gaussian distribution $\pi(\vx)$. We hereafter adopt the configurations in \citet{karras2022elucidating}, who set $\vmu(\vx, t)=0$ and $\sigma(t)=\sqrt{2t}$. In this case, we have
$p_t(\vx)=p_{\mathrm{data}}(\vx)\otimes\mathcal{N} (\mathbf{0}, t^2\bm{I})$, where $\otimes$ denotes the convolution operation, and $\pi(\vx)=N(\mathbf{0}, T^2\bm{I})$. We conduct evaluations on diffusion models trained on LSUN Cat-256 and  LSUN Bedroom-256 images as a basis for our evaluation. We verify the effectiveness of our method on the Stable Diffusion~\citep{Rombach_2022_CVPR}, and the results are provided in the Appendix \ref{results of stable diffusion}.

\subsection{The lubricating coefficient}

We keep all other parameters unchanged and incrementally increase $\gamma$ from 0 to 1, as illustrated in Figure $\ref{figure29}$. Upon observation, it is apparent that as $\gamma$ increases, the artifacts in the image gradually diminish, resulting in a notable enhancement in image quality. However, at the same time, the image gradually loses some of its original features and introduces additional information.

\begin{figure}[ht]
\begin{center}
\includegraphics[width=13cm]{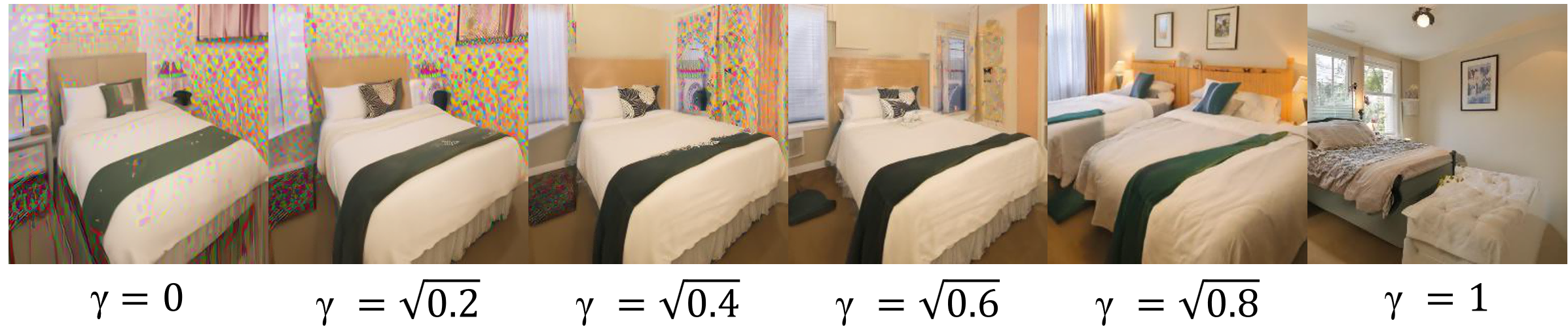}
\end{center}
\caption{The impact of lubricating coefficient $\gamma$.}
\label{figure29}
\end{figure}

\subsection{The change in style}
As shown in Figure \ref{figure19}, we can change the style  of  images by modifying the values of $\alpha$ and $\beta$. In order to facilitate comparison with the results of  spherical linear interpolation, we choose $\alpha=\sin\left(\frac{\pi}{2} \cdot \lambda \right)$, $\beta=\cos\left(\frac{\pi}{2} \cdot \lambda\right)$ and $\gamma=0$. Additionally, more interpolation results are available in the appendix (Figure \ref{figure28} and Figure \ref{figure27}).

\begin{figure}[ht]
\begin{center}
\includegraphics[width=13cm]{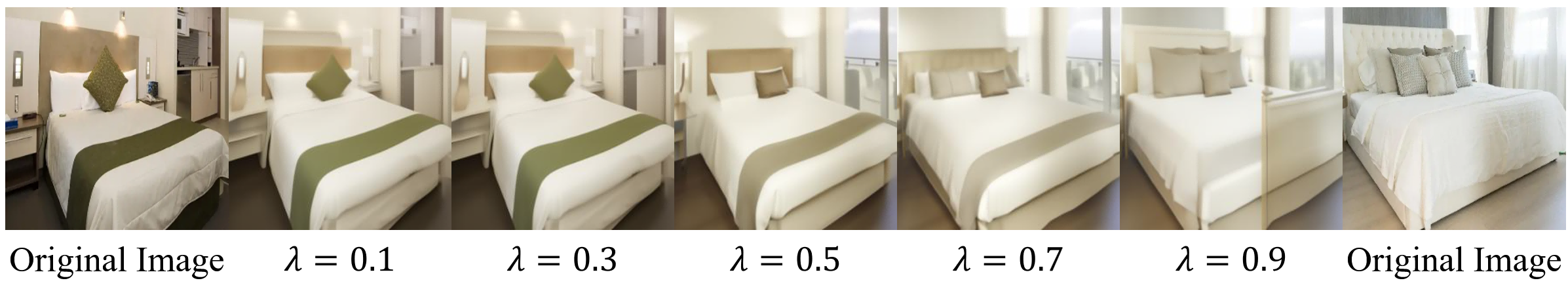}
\end{center}
\caption{The image style changes with the variation of $\lambda$.}
\label{figure19}
\end{figure}


\subsection{Boundary control}

\begin{wrapfigure}{c}{7.3cm}
\centering
\includegraphics[width=7.3cm]{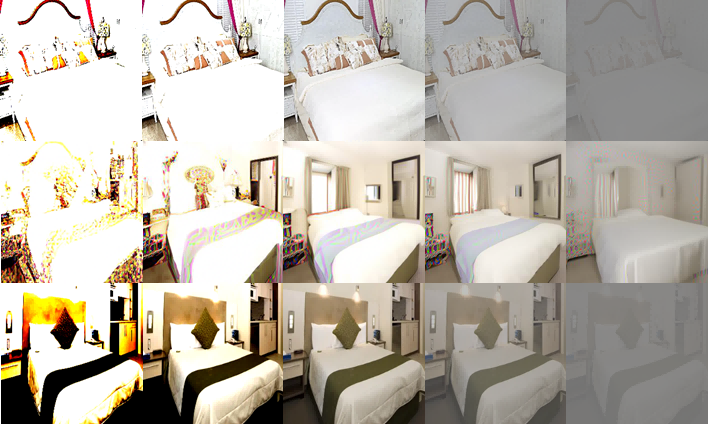}
\caption{The effect of image scaling is demonstrated in the top and bottom rows of images, showcasing results obtained by scaling the original image with scales $l=[10, 2, 1, 0.5, 0.1]$. In the middle row, we present the outcomes of image interpolation, maintaining all parameters except for $\mu$ and $\nu$, which are adjusted to $\mu=\nu=[10, 2, 1, 0.5, 0.1]$.}
\label{figure20}
\end{wrapfigure}

We implemented  boundary control on the latent variables, and the results are depicted  in Figure \ref{figure25}. It can be seen that as the boundaries decrease, the artifacts on the image are greatly reduced, which substantially improves the quality of the images. Furthermore, we  compared three boundary control methods: control before interpolation, control after interpolation, and control before and after interpolation. The results of the three methods are shown in Figure \ref{figure21}. Upon examination, it can be observed that the method of applying constraints to latent variables before and after interpolation is more effective in reducing artifacts. 

However, reducing the boundaries also leads to some loss of image features and darkening, which implies that boundary control can result in information loss. To address this issue, one effective strategy is to incorporate the original image information in the noisy image space, as detailed below.

\begin{figure}[ht]
\centering
\includegraphics[width=13cm]{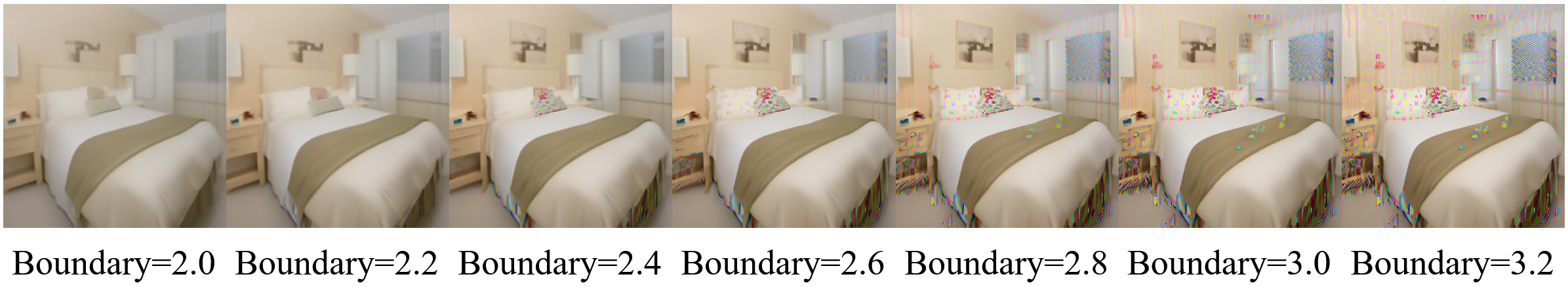}
\caption{The impact of the boundary. }
\label{figure25}
\end{figure}

\begin{figure}[ht]
\centering
\includegraphics[width=11cm]{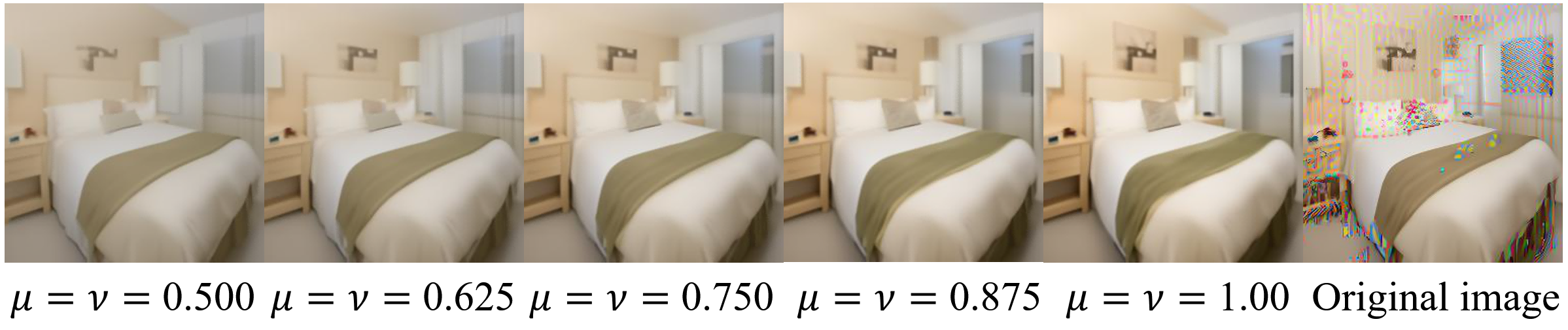}
\caption{Supplementing the original image information. }
\label{figure22}
\end{figure}

\subsection{The impact of image information}

Figure \ref{figure20} illustrates the impact of modifying the information of the original images (i.e., modifying the values of $\mu$ and $\nu$) on the interpolation results. It can be observed that smaller values of $\mu$ and $\nu$ lead to darker images while increasing them results in overly bright pictures. Images obtained with smaller $\mu$ and $\nu$ values exhibit similarities to those obtained with boundary control applied. Thus, by modifying the values of $\mu$ and $\nu$, we may be able to mitigate the feature loss and darkening issues caused by boundary control, which can be seen in Figure \ref{figure22}. What's more, our method ensures that noise levels meet the necessary threshold, but the information of images may exceed or fall short of the desired threshold because this is determined by $\alpha$ and $\beta$. Therefore, by adjusting the parameters $\mu$ and $\nu$, we can regulate the information of images, thereby improving the interpolation results.

\subsection{Final result}

We collected images from the Internet and employed three different methods for image interpolation. Throughout the interpolation process, we maintained consistency in parameter settings, with detailed information available in the Appendix. From the interpolation results, we observe that our method effectively reduces artifacts and maximally preserves information compared to directly applying spherical linear interpolation. Furthermore, our approach outperforms methods involving noise introduction in preserving original image features, as illustrated in Figures \ref{figure23} and \ref{figure26}.

\section{Conclusion}
In this paper, we propose a novel method that surpasses the limitations of spherical linear interpolation. Our approach establishes a unified framework for both spherical linear interpolation and directly introducing noise for interpolation methods, leveraging the strengths of each. Additionally, by imposing boundary control on noise and supplementing the original image information, our method effectively tackles the challenges posed by noise levels exceeding or falling below the denoising threshold. Through the correction of latent variables, our approach improves the interpolation results of natural images, achieving superior interpolation outcomes.

\textbf{Limitation and future work.} Our approach, like any method, is not without its drawbacks and constraints. Compared to directly introducing noise for interpolation, our method involves an extra step: mapping the images to the latent variables. This additional overhead will double the processing time. However, this extra overhead leads to better feature preservation. Furthermore, our paper mainly focuses on image data. Accordingly, its effectiveness in other modalities has not been validated, which is a potential limitation of our work. Thus, we will explore the possibility of our method in different modalities in our future work. We will also explore the possibility of applying our method to different scenarios, such as a) investigating the interpolation between natural and adversarial images~\citep{zhang2021causaladv}, b) studying the interpolation among different environments ~\citep{arjovsky2019invariant}, and c) exploring the interpolation between in-distribution and out-of-distribution data~\citep{fang2022out}. Moreover, it is exciting to apply our method to many interesting scenarios, like interpolation between different person images, interpolation for low-level computer vision~\citep{zamir2021multi}, and interpolation for video generation~\citep{liu2024sora}.

\section{Acknowledgments}

The work was supported by grants from the National Key R$\&$D Program of China (No. 2021ZD0111801). YGZ and BH were supported by the NSFC General Program No. 62376235, Guangdong Basic and Applied Basic Research Foundation No. 2022A1515011652, HKBU Faculty Niche Research Areas No. RC-FNRA-IG/22-23/SCI/04, and HKBU CSD Departmental Incentive Scheme. TL is partially supported by the following Australian Research Council projects: FT220100318, DP220102121, LP220100527, LP220200949, IC190100031.

\bibliography{iclr2024_conference}
\bibliographystyle{iclr2024_conference}

\newpage

\appendix
\setcounter{mytheorem}{0}
\section{Proofs}

\subsection{The proof of theorem \ref{theorem1}\label{theorem1:appendix}}

\begin{lemma}
Let $\bm{X} = (X_1, . . . , X_n) \in \mathbb{R}^n$ be a random vector with independent, sub-gaussian coordinates $X_i$ that satisfy $\mathbb{E} X^2_i =1$. Then
$$\|\|\bm{X}\|_2-\sqrt{n}\|_{\psi_2}\leq CK^2$$
where $K=\max_i\|X_i\|_{\psi_2}$ , C is an absolute constant and we define:
$$\|\bm{X}\|_{\psi_1}=\inf\{t>0:\mathbb{E}\exp(|\bm{X}|/t)\leq2\}$$
$$\|\bm{X}\|_{\psi_2}=\inf\{t>0:\mathbb{E}\exp(\bm{X}^2/t^2)\leq2\}$$
\end{lemma}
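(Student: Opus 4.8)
The plan is to reduce the statement to a sub-gaussian tail bound for $\|\bm{X}\|_2-\sqrt{n}$ and then derive that tail bound from Bernstein's inequality applied to the \emph{squared} norm. The first observation is the standard equivalence (up to absolute constants) between the $\psi_2$-norm and sub-gaussian tails: a random variable $Z$ satisfies $\|Z\|_{\psi_2}\le CK^2$ iff $\mathbb{P}\{|Z|\ge t\}\le 2\exp(-ct^2/K^4)$ for all $t\ge 0$. Hence it suffices to prove
$$\mathbb{P}\left\{ \bigl| \|\bm{X}\|_2-\sqrt{n} \bigr| \ge t \right\}\le 2\exp\!\left(-\frac{c\,t^2}{K^4}\right),\qquad t\ge 0 .$$

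Next I would pass to the squared norm. Write $\tfrac1n\|\bm{X}\|_2^2-1=\tfrac1n\sum_{i=1}^n (X_i^2-1)$. Since each $X_i$ is sub-gaussian with $\|X_i\|_{\psi_2}\le K$, the variable $X_i^2$ is sub-exponential with $\|X_i^2\|_{\psi_1}=\|X_i\|_{\psi_2}^2\le K^2$, and after centering $\|X_i^2-1\|_{\psi_1}\le CK^2$ (using $\E X_i^2=1$ and that centering changes the $\psi_1$-norm by at most a constant factor). The summands are independent and mean zero, so Bernstein's inequality for sums of sub-exponential variables gives
$$\mathbb{P}\left\{ \left| \frac1n\sum_{i=1}^n (X_i^2-1) \right| \ge u \right\}\le 2\exp\!\left(-c\,n\min\!\left(\frac{u^2}{K^4},\frac{u}{K^2}\right)\right).$$

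Then I would transfer this back to $\|\bm{X}\|_2$ via the elementary inequality: for any $z\ge 0$, if $|z-1|\ge\delta$ then $|z^2-1|\ge\max(\delta,\delta^2)$. Applying it with $z=\tfrac{1}{\sqrt n}\|\bm{X}\|_2$ and $\delta=t/\sqrt n$ turns the event $\{|\|\bm{X}\|_2-\sqrt n|\ge t\}$ into a sub-event of $\{|\tfrac1n\|\bm{X}\|_2^2-1|\ge\max(t/\sqrt n,\,t^2/n)\}$, so I plug $u=\max(t/\sqrt n,\,t^2/n)$ into the Bernstein bound. A short case analysis ($t\le\sqrt n$ versus $t>\sqrt n$), together with the fact that $K\ge c_0$ for an absolute constant $c_0$ since $1=\E X_i^2\le C\|X_i\|_{\psi_2}^2\le CK^2$, shows that in both regimes $n\min(u^2/K^4,\,u/K^2)\ge c\,t^2/K^4$, which yields the displayed tail bound and hence the lemma.

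The routine calculations I am skipping are the precise constants in the $\psi_1/\psi_2$ manipulations and in Bernstein's inequality. The only mildly delicate point — the ``main obstacle'' such as it is — is the bookkeeping in the last step: one must verify that the minimum in Bernstein's inequality always resolves to the Gaussian branch $t^2/K^4$, which uses $t^2/n\le t/\sqrt n$ when $t\le\sqrt n$, uses $t^2>n$ when $t>\sqrt n$, and uses $K\ge c_0$. There is no deep difficulty; this is the concentration-of-norm theorem from high-dimensional probability, and Theorem~\ref{theorem1} then follows immediately, since the tail bound forces $\|\bm{X}\|_2$ to concentrate in an $O(1)$-width shell around $\sqrt n$, i.e.\ the standard Gaussian measure in $\mathbb{R}^n$ is essentially supported on the sphere of radius $\sqrt n$.
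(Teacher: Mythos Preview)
Your proposal is correct and follows essentially the same route as the paper: reduce to a sub-gaussian tail bound, apply Bernstein's inequality to $\tfrac{1}{n}\sum_i (X_i^2-1)$ using $\|X_i^2-1\|_{\psi_1}\le CK^2$, and transfer the resulting bound on $\|\bm X\|_2^2$ to $\|\bm X\|_2$ via the elementary inequality $|z-1|\ge\delta\Rightarrow|z^2-1|\ge\max(\delta,\delta^2)$. The only cosmetic difference is that the paper assumes $K\ge 1$ up front (which you justify via $1=\E X_i^2\le CK^2$) and writes Bernstein's bound directly as $2\exp\bigl(-\tfrac{cn}{K^4}\min(u^2,u)\bigr)$, so its final case analysis is slightly shorter than yours.
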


\begin{proof}

For simplicity, we assume that $K \geq 1$. We shall apply Bernstein’s deviation inequality for the normalized sum of independent, mean zero random variables
$$\frac{1}{n}\|\bm{X}\|_2^2-1=\frac{1}{n}\sum_{i=1}^n{(X_i^2-1)}$$
Since the random variable $X_i$ is sub-gaussian, $X_i^2-1$ is sub-exponential, and more precisely
\begin{align*}
\|X_i^2-1\|_{\psi_1}&\leq C\|X_i^2\|_{\psi_1}\\
&=C\|X_i\|^2_{\psi_2}\\
&\leq CK^2
\end{align*}
Applying Bernstein’s inequality, we obtain for any $u \geq 0$ that
$$\mathbb{P}\{|\frac{1}{n}\|\bm{X}\|_2^2-1|\geq u\}\leq 2\exp(-\frac{cn}{K^4}\min(u^2, u))$$
This is a good concentration inequality for $\|X\|_2^2$, from which we are going to deduce a concentration inequality for $\|\bm{X}\|$. To make the link, we can use the following elementary observation that is valid for all numbers $z \geq 0$:
$$|z-1|\geq \delta \text{ implies } |z^2-1|\geq\max(\delta, \delta^2)$$
 We obtain for any $\delta \geq 0$ that
\begin{align*}
\mathbb{P}\{|\frac{1}{n}\|\bm{X}\|_2^2-1|\geq \delta\}&\leq\mathbb{P}\{|\frac{1}{n}\|\bm{X}\|_2^2-1|\geq \max(\delta, \delta^2)\}     \\
&\leq 2\exp(-\frac{cn}{K^4}\cdot \delta^2)\quad (\text{for } u=\max(\delta, \delta^2))
\end{align*}
Changing variables to $t=\delta\sqrt{n}$, we obtain the desired sub-gaussian tail
$$\mathbb{P}\{|\|\bm{X}\|_2-\sqrt{n}|\geq t\}\leq 2\exp(-\frac{ct^2}{K^4})\quad \text{for all } t\geq 0$$ 
As we know form  Sub-gaussian properties, this is equivalent to the conclusion of the theorem.
\end{proof}

\begin{mytheorem}
The standard normal distribution  $\mathcal{N}(\mathbf{0},\bm{I}_n)$ in high dimensions is close to the uniform distribution on the sphere of radius $\sqrt{n}$.
\end{mytheorem}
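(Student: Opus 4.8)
The plan is to exploit the rotational invariance of the standard Gaussian to factor $\bm{X}\sim\mathcal{N}(\mathbf{0},\bm{I}_n)$ into an independent radial part and angular part, and then to transfer the tight radial concentration supplied by the Lemma. Concretely, write $\bm{X}=R\,\bm{\Theta}$ with $R=\|\bm{X}\|_2$ and $\bm{\Theta}=\bm{X}/\|\bm{X}\|_2$. Since the law of $\bm{X}$ is invariant under every orthogonal transformation, $\bm{\Theta}$ is distributed uniformly on the unit sphere $S^{n-1}$ and is independent of $R$. The reference object in the theorem, the uniform distribution on the sphere of radius $\sqrt{n}$, is then exactly the law of $\bm{Y}:=\sqrt{n}\,\bm{\Theta}$, which gives an explicit coupling of the two distributions on a common probability space.

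Next I would bound the distance between $\bm{X}$ and $\bm{Y}$ under this coupling. Since they share the same angular component,
\begin{equation*}
\|\bm{X}-\bm{Y}\|_2=\|R\,\bm{\Theta}-\sqrt{n}\,\bm{\Theta}\|_2=\bigl|\,\|\bm{X}\|_2-\sqrt{n}\,\bigr|.
\end{equation*}
Now apply the Lemma with $X_i$ the coordinates of the standard Gaussian: they are sub-gaussian with $\mathbb{E}X_i^2=1$ and $K=\max_i\|X_i\|_{\psi_2}$ an absolute constant, so $\bigl\|\,\|\bm{X}\|_2-\sqrt{n}\,\bigr\|_{\psi_2}\le CK^2=O(1)$. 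Hence $\mathbb{P}\{\,|\,\|\bm{X}\|_2-\sqrt{n}\,|\ge t\,\}\le 2\exp(-ct^2)$ and $\mathbb{E}\bigl[(\,\|\bm{X}\|_2-\sqrt{n}\,)^2\bigr]=O(1)$, so the $2$-Wasserstein distance between $\mathcal{N}(\mathbf{0},\bm{I}_n)$ and the uniform law on the sphere of radius $\sqrt{n}$ is a constant independent of $n$.

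Finally I would make precise what \textbf{close} means: the additive gap between the two distributions is $O(1)$, while both live at scale $\sqrt{n}$ — the typical norm and the radius of the sphere are of order $\sqrt{n}$. Equivalently, after rescaling by $1/\sqrt{n}$, the coupling gives $\|\bm{X}/\sqrt{n}-\bm{\Theta}\|_2=\bigl|\,\|\bm{X}\|_2/\sqrt{n}-1\,\bigr|=O_P(1/\sqrt{n})\to 0$, so the rescaled Gaussian concentrates on the unit sphere, on which its angular marginal is already exactly uniform; this is the stated sense in which the Gaussian is approximately uniform on the radius-$\sqrt{n}$ sphere (the "thin-shell" picture: almost all Gaussian mass sits in an $O(1)$-width shell around that sphere).

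I expect the only genuine obstacle to be the choice of metric, which is why I would foreground it. Total variation is the wrong notion: the Gaussian has a density on all of $\mathbb{R}^n$ whereas the sphere measure is supported on a null set, so their TV distance equals $1$ for every $n$. The statement is meaningful only in a transportation/coupling sense (or as the thin-shell phenomenon above). Granting that interpretation, the result is an immediate consequence of rotational symmetry together with the Lemma, and no further computation is required.
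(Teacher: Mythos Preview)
Your proposal is correct and follows essentially the same route as the paper: both arguments use the polar decomposition $g=r\theta$ together with the sub-gaussian norm concentration from the Lemma to conclude $r\approx\sqrt{n}$ and hence $g\approx\sqrt{n}\,\theta\sim\mathrm{Unif}(\sqrt{n}\,S^{n-1})$. Your version is in fact more careful than the paper's, which leaves the meaning of ``close'' informal; your explicit coupling, Wasserstein quantification, and remark that total variation is the wrong metric are all additions that the paper omits.
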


\begin{proof}
from Lemma 1, for the norm of $g\sim \mathcal{N}(0, \bm{I}_n)$ we have  the following concentration inequality:
$$\mathbb{P}\{|\|g\|_2-\sqrt{n}|\geq t\}\leq 2\exp(-ct^2)\quad \text{for all } t\geq 0$$

Let us  represent  $g\sim \mathcal{N}(0, \bm{I}_n)$ in polar form as
$$g=r\theta$$
where $r=\|g\|_2$ is the length and $\theta=g/\|g\|_2$ is the direction of $g$.

Concentration inequality says that $r=\|g\|_2\approx\sqrt{n}$  with high probability, so
$$g\approx\sqrt{n}\theta\sim \text{Unif}(\sqrt{n}S^{n-1})$$

In other words, the standard normal distribution in high dimensions is close to
the uniform distribution on the sphere of radius $\sqrt{n}$, i.e.
$$\mathcal{N}   (0, \bm{I}_n)\approx\sqrt{n}\theta\sim \text{Unif}(\sqrt{n}S^{n-1})$$
\end{proof}

\subsection{The proof of theorem \ref{theorem2}\label{theorem2:appendix}}
\begin{definition}A random vector $\bm{X}$ in $\mathbb{R}^n$ is called \textbf{isotropic} if
$$\sum(\bm{X})=\mathbb{E}\bm{X}\bm{X}^T=\bm{I}_n$$
where $\bm{I}_n$ denotes the identity matrix in $\mathbb{R}^n$.
\end{definition}

\begin{lemma}
A random vector $\bm{X}$ in $\mathbb{R}^n$ is isotropic if and only if
$$\mathbb{E}\langle \bm{X}, \bm{x}\rangle^2=\|\bm{x}\|_2^2 \quad \text{ for all } \bm{x} \in \mathbb{R}^n$$
\end{lemma}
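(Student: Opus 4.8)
The plan is to prove both implications by reducing the stated condition to a single statement about the matrix $\mathbb{E}\bm{X}\bm{X}^T$. The key algebraic observation is that for any fixed $\bm{x}\in\mathbb{R}^n$ the scalar $\langle\bm{X},\bm{x}\rangle$ satisfies $\langle\bm{X},\bm{x}\rangle^2=(\bm{x}^T\bm{X})(\bm{X}^T\bm{x})=\bm{x}^T(\bm{X}\bm{X}^T)\bm{x}$. Since $\bm{x}$ is deterministic, linearity of expectation lets me pull it outside, giving $\mathbb{E}\langle\bm{X},\bm{x}\rangle^2=\bm{x}^T(\mathbb{E}\bm{X}\bm{X}^T)\bm{x}$. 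Writing $\|\bm{x}\|_2^2=\bm{x}^T\bm{I}_n\bm{x}$, the claimed identity $\mathbb{E}\langle\bm{X},\bm{x}\rangle^2=\|\bm{x}\|_2^2$ is thus equivalent to $\bm{x}^T(\mathbb{E}\bm{X}\bm{X}^T-\bm{I}_n)\bm{x}=0$, so the entire lemma reduces to the linear-algebra fact that the symmetric matrix $M:=\mathbb{E}\bm{X}\bm{X}^T-\bm{I}_n$ is the zero matrix if and only if its associated quadratic form $\bm{x}^TM\bm{x}$ vanishes for every $\bm{x}$.

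The forward direction ($\Rightarrow$) is then immediate: if $\bm{X}$ is isotropic, then $M=0$ by definition, so $\bm{x}^TM\bm{x}=0$, and hence $\mathbb{E}\langle\bm{X},\bm{x}\rangle^2=\bm{x}^T\bm{I}_n\bm{x}=\|\bm{x}\|_2^2$ for all $\bm{x}$.

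The reverse direction ($\Leftarrow$) is where the only real content lies. I would first note that $M$ is symmetric, because $\mathbb{E}\bm{X}\bm{X}^T$ is symmetric (each realization $\bm{X}\bm{X}^T$ is symmetric, and symmetry is preserved under expectation) and $\bm{I}_n$ is symmetric. Given $\bm{x}^TM\bm{x}=0$ for all $\bm{x}$, I would recover the entries of $M$ by a polarization step: evaluating the quadratic form at $\bm{x}=\bm{e}_i$ yields $M_{ii}=0$, and evaluating at $\bm{x}=\bm{e}_i+\bm{e}_j$ yields $M_{ii}+M_{jj}+2M_{ij}=0$ (the two cross terms coincide by symmetry), whence $M_{ij}=0$. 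Thus every entry of $M$ vanishes, so $\mathbb{E}\bm{X}\bm{X}^T=\bm{I}_n$ and $\bm{X}$ is isotropic.

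The one point that deserves care — and the only place the argument could go wrong — is that symmetry of $M$ is essential for the reverse direction: without it the conclusion fails, since any antisymmetric matrix also has identically vanishing quadratic form. I would therefore make the symmetry of $\mathbb{E}\bm{X}\bm{X}^T$ explicit before invoking the polarization identity, rather than treating the implication ``$\bm{x}^TM\bm{x}\equiv0\Rightarrow M=0$'' as automatic.
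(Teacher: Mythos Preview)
Your proof is correct and follows essentially the same approach as the paper: both reduce the lemma to the identity $\mathbb{E}\langle\bm{X},\bm{x}\rangle^2=\bm{x}^T(\mathbb{E}\bm{X}\bm{X}^T)\bm{x}$ and then invoke the fact that two symmetric matrices agree if and only if their quadratic forms agree for all $\bm{x}$. The only difference is that the paper simply cites this linear-algebra fact, whereas you spell out the polarization argument explicitly and flag the role of symmetry.
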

\begin{proof}
Recall that two symmetric $n \times n$ matrices $\mA$ and $\mB$ are equal if and only if $\bm{x}^T\mA\bm{x} = \bm{x}^T\mB\bm{x}$ for all $\bm{x}\in \mathbb{R}^n$. Thus $\bm{X}$ is isotropic if and only if
$$\bm{x}^T(\mathbb{E}\bm{X}\bm{X}^T)\bm{x}=\bm{x}^T\bm{I}_n\bm{x}\quad \text{for all }\bm{x}\in\mathbb{R}^n$$

The left side of this identity equals $\mathbb{E}\langle \bm{X}, \bm{x}\rangle^2$, and the right side is $\|\bm{x}\|^2_2$.
\end{proof}

\begin{lemma}
Let $\bm{X}$ be an isotropic random vector in $\mathbb{R}^n$. Then
$$\mathbb{E}\|\bm{X}\|_2^2=n$$
Moreover, if $\bm{X}$ and $\bm{Y}$ are two independent isotropic random vectors in $\mathbb{R}^n$, then
$$\mathbb{E}\langle\bm{X}, \bm{Y}\rangle^2=n$$
\end{lemma}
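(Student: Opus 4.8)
The plan is to prove the two displayed identities in Lemma~5 in turn, both reducing to the isotropy characterization from Lemma~4. First I would establish $\mathbb{E}\|\bm{X}\|_2^2 = n$. The key observation is that $\|\bm{X}\|_2^2 = \sum_{i=1}^n X_i^2 = \sum_{i=1}^n \langle \bm{X}, \bm{e}_i\rangle^2$, where $\bm{e}_1, \dots, \bm{e}_n$ is the standard basis of $\mathbb{R}^n$. Taking expectations and applying Lemma~4 with $\bm{x} = \bm{e}_i$ gives $\mathbb{E}\langle \bm{X}, \bm{e}_i\rangle^2 = \|\bm{e}_i\|_2^2 = 1$, so summing over $i$ yields $\mathbb{E}\|\bm{X}\|_2^2 = n$. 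Alternatively, and perhaps more cleanly, one can write $\|\bm{X}\|_2^2 = \Tr(\bm{X}\bm{X}^T)$ and use linearity of trace and expectation: $\mathbb{E}\|\bm{X}\|_2^2 = \Tr(\mathbb{E}\bm{X}\bm{X}^T) = \Tr(\bm{I}_n) = n$.

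\textbf{The second identity} concerns two independent isotropic vectors $\bm{X}, \bm{Y}$. Here I would condition on $\bm{Y}$: by independence, conditionally on $\bm{Y} = \bm{y}$ the inner product $\langle \bm{X}, \bm{Y}\rangle$ is just $\langle \bm{X}, \bm{y}\rangle$ for a fixed vector $\bm{y}$, so Lemma~4 gives $\mathbb{E}\big[\langle \bm{X}, \bm{Y}\rangle^2 \mid \bm{Y} = \bm{y}\big] = \|\bm{y}\|_2^2$. Then by the tower property, $\mathbb{E}\langle \bm{X}, \bm{Y}\rangle^2 = \mathbb{E}\big[\|\bm{Y}\|_2^2\big] = n$, where the last equality is the first identity applied to $\bm{Y}$. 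One must be slightly careful to invoke independence correctly — the step that uses Lemma~4 "inside" the conditional expectation is valid precisely because $\bm{X}$ is independent of $\bm{Y}$, so the conditional law of $\bm{X}$ given $\bm{Y}$ is just the (isotropic) law of $\bm{X}$.

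\textbf{The main obstacle} is not really a mathematical difficulty — both identities are short — but rather making the conditioning argument in the second part rigorous without hand-waving. The cleanest alternative, which avoids conditioning entirely, is to expand using coordinates: $\langle \bm{X}, \bm{Y}\rangle^2 = \sum_{i,j} X_i Y_i X_j Y_j$, take expectations, use independence to factor $\mathbb{E}[X_i X_j Y_i Y_j] = \mathbb{E}[X_i X_j]\,\mathbb{E}[Y_i Y_j]$, and note that isotropy means $\mathbb{E}[X_i X_j] = \delta_{ij} = \mathbb{E}[Y_i Y_j]$, so only the $i = j$ terms survive and the sum collapses to $\sum_i 1 = n$. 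I would likely present the conditioning argument as the main line and mention the coordinate computation as the fallback. Finally, to connect back to Theorem~2 as stated ("independent isotropic random vectors tend to be almost orthogonal"), I would remark that $\mathbb{E}\langle \bm{X}, \bm{Y}\rangle^2 = n$ while each vector has squared norm concentrating around $n$ (by Theorem~1 / Lemma~1 in the sub-gaussian case), so the normalized inner product $\langle \bm{X}, \bm{Y}\rangle / (\|\bm{X}\|_2 \|\bm{Y}\|_2)$ has second moment of order $1/n$, hence is $O(1/\sqrt{n})$ with high probability — i.e., the angle is close to $\pi/2$.
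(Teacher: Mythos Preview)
Your proposal is correct and matches the paper's argument essentially verbatim: the paper uses the trace computation $\mathbb{E}\|\bm{X}\|_2^2 = \Tr(\mathbb{E}\bm{X}\bm{X}^T) = \Tr(\bm{I}_n) = n$ for the first identity (your ``alternative'' is in fact their main line), and for the second identity it conditions on $\bm{Y}$, applies the isotropy characterization lemma to get the inner expectation equal to $\|\bm{Y}\|_2^2$, and then invokes the first part. Your extra coordinate-expansion fallback and the concluding remark linking to Theorem~2 are fine additions but not needed for the lemma itself.
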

\begin{proof}
To prove the first part, we have
\begin{align*}
\mathbb{E}\|\bm{X}\|_2^2&=\mathbb{E}\bm{X}^T\bm{X}=\mathbb{E}\, \text{tr}(\bm{X}^T\bm{X})\quad(\text{viewing }  \bm{X}^T\bm{X} \text{ as a } 1\times 1 \text{ matrix})\\
&=\mathbb{E}\, \text{tr}(\bm{X}\bm{X}^T)\quad(\text{by the cyclic property of trace})\\
&=\text{tr}\, \mathbb{E}(\bm{X}\bm{X}^T)\quad(\text{by linearity}) \\
&=\text{tr}(\bm{I}_n)\quad(\text{by isotropy}) \\
&=n
\end{align*}

To prove the second part, we use a conditioning argument. Fix a realization of $\bm{Y}$
and take the conditional expectation (with respect to $\bm{X}$) which we denote $\mathbb{E}_{\bm{X}}$.
The law of total expectation says that
$$\mathbb{E}\langle \bm{X}, \bm{Y}\rangle^2=\mathbb{E}_{\bm{Y}}\mathbb{E}_{\bm{X}}[\langle \bm{X}, \bm{Y}\rangle^2|\bm{Y}], $$

where by $\mathbb{E}_{\bm{Y}}$ we of course denote the expectation with respect to $\bm{Y}$. To compute
the inner expectation, we apply Lemma 2. with $\bm{x} = \bm{Y}$ and conclude that the inner expectation equals $\|\bm{Y}\|_2^2$. Thus
\begin{align*}
    \mathbb{E}\langle \bm{X}, \bm{Y}\rangle^2&=\mathbb{E}_{\bm{Y}}\|\bm{Y}\|_2^2 \\
    &=n \quad(\text{by the first part of lemma})
\end{align*}
\end{proof}
\begin{mytheorem}
In high-dimensional spaces, independent and isotropic random vectors tend to be almost orthogonal
\end{mytheorem}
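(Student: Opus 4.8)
The plan is to quantify "almost orthogonal" via the normalized inner product and show it concentrates at zero. Let $\bm{X}, \bm{Y}$ be independent isotropic random vectors in $\mathbb{R}^n$. The relevant quantity is the cosine of the angle between them, $\cos\vartheta = \langle \bm{X}, \bm{Y}\rangle / (\|\bm{X}\|_2 \|\bm{Y}\|_2)$, and the goal is to argue that this is typically of order $1/\sqrt{n}$, so that $\vartheta \approx \pi/2$ with high probability as $n \to \infty$. First I would invoke Lemma 3 from the excerpt: since $\bm{X}, \bm{Y}$ are independent and isotropic, $\mathbb{E}\langle \bm{X}, \bm{Y}\rangle^2 = n$ and $\mathbb{E}\|\bm{X}\|_2^2 = \mathbb{E}\|\bm{Y}\|_2^2 = n$. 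So in expectation the squared inner product is $n$, whereas each squared norm is $n$; the ratio of the typical inner product magnitude $\sqrt{n}$ to the typical norm product $n$ is $1/\sqrt{n} \to 0$.

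The key steps, in order, would be: (i) compute $\mathbb{E}\langle \bm{X}, \bm{Y}\rangle^2 = n$ via Lemma 3; (ii) apply Markov's (Chebyshev-type) inequality to the nonnegative random variable $\langle \bm{X}, \bm{Y}\rangle^2$ to get $\mathbb{P}\{|\langle \bm{X}, \bm{Y}\rangle| \geq t\sqrt{n}\} \leq 1/t^2$ for any $t > 0$; (iii) separately control the norms from below — using $\mathbb{E}\|\bm{X}\|_2^2 = n$ together with the concentration of $\|\bm{X}\|_2$ around $\sqrt{n}$ established in the proof of Theorem \ref{theorem1} (in the sub-gaussian case, $\|\bm{X}\|_2 \geq \sqrt{n}/2$ with probability exponentially close to $1$; in general one can at least say $\|\bm{X}\|_2^2$ is not too small with moderate probability); (iv) combine (ii) and (iii) through a union bound to conclude that with high probability $|\cos\vartheta| = |\langle \bm{X}, \bm{Y}\rangle| / (\|\bm{X}\|_2 \|\bm{Y}\|_2) \lesssim t / \sqrt{n}$, which tends to $0$ as $n \to \infty$ for fixed $t$; (v) interpret this geometrically: $\vartheta = \arccos(\cos\vartheta) \to \pi/2$, i.e. the vectors are asymptotically orthogonal.

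For the cleanest statement I would actually present the in-expectation version as the core computation — $\mathbb{E}[\cos^2\vartheta]$ is small — and then upgrade to a high-probability statement. One neat route: by the law of total expectation, condition on $\bm{Y}$, so that $\mathbb{E}[\langle \bm{X}, \bm{Y}\rangle^2 \mid \bm{Y}] = \|\bm{Y}\|_2^2$ by Lemma 2, hence $\mathbb{E}[\langle \bm{X}, \bm{Y}/\|\bm{Y}\|_2\rangle^2 \mid \bm{Y}] = 1$; this shows the inner product with the \emph{unit} vector $\bm{Y}/\|\bm{Y}\|_2$ has second moment exactly $1$, independent of dimension, while the full vector $\bm{X}$ has squared length $\approx n$. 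Dividing, the squared cosine has expectation $\approx 1/n$, and Markov finishes it.

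The main obstacle I expect is the lower bound on the norms in step (iii): isotropy alone gives $\mathbb{E}\|\bm{X}\|_2^2 = n$ but not concentration, so $\|\bm{X}\|_2$ could in principle be small with non-negligible probability for pathological isotropic distributions. This is why the theorem is naturally paired with Theorem \ref{theorem1} and its sub-gaussian hypothesis — under that assumption the norm concentrates sharply and the argument is airtight; without it one should either add a sub-gaussian (or bounded-fourth-moment) assumption, or be content with the weaker "in expectation $\cos^2\vartheta = O(1/n)$" conclusion, which already captures the asymptotic orthogonality phenomenon the paper needs. I would flag this dependence explicitly rather than hide it.
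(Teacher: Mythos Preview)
Your proposal is correct and follows essentially the same route as the paper: invoke Lemma~3 for $\mathbb{E}\langle \bm{X},\bm{Y}\rangle^2 = n$ and $\mathbb{E}\|\bm{X}\|_2^2 = n$, then pass to the normalized vectors and conclude that $|\langle \bar{\bm{X}},\bar{\bm{Y}}\rangle| \asymp 1/\sqrt{n}$. You are in fact more careful than the paper's own argument, which simply asserts the heuristic conclusion without spelling out the Markov step or the norm lower bound; you also rightly flag that the concentration $\|\bm{X}\|_2 \asymp \sqrt{n}$ needs the sub-gaussian input from Lemma~1 (Theorem~\ref{theorem1}) rather than isotropy alone---a dependence the paper's proof leaves implicit.
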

\begin{proof}
Let us normalize
the random vectors X and Y in Lemma 3 setting
$$\bar{\bm{X}}:=\frac{\bm{X}}{\|\bm{X}\|_2}\quad\text{and}\quad\bar{\bm{Y}}:=\frac{\bm{Y}}{\|\bm{Y}\|_2}$$
Lemma 3 is basically telling us that $\|X\|_2\asymp\sqrt{n}$, $\|\bm{Y}\|_2\asymp\sqrt{n}$ and $\langle \bm{X}, \bm{Y}\rangle\asymp\sqrt{n}$ with high probability, which implies that
$$|\langle\bar{\bm{{X}}}, \bar{\bm{{Y}}}\rangle|\asymp\frac{1}{\sqrt{n}}$$
Thus, in high-dimensional spaces independent and isotropic random vectors tend
to be almost orthogonal.
\end{proof}

\section{Experiments with Models Training in Single Domain}

\textbf{Parameter choices}  To facilitate comparison with  spherical linear interpolation results, we maintain the condition $\alpha/\beta=\sin\left(\frac{\pi}{2} \cdot \lambda \right)/\cos\left(\frac{\pi}{2} \cdot \lambda\right)$, and ensure that $\sqrt{\alpha^2+\beta^2+\gamma^2}=1$ when computing $\alpha$ and $\beta$. And we set $\mu=2.0*\alpha/(\alpha+\beta)$, $\nu=2.0*\beta/(\alpha+\beta)$. Additionally, several other parameters, albeit hyperparameters, have predefined ranges for user convenience. For instance, the boundary ranges from 2.0 to 2.4, $\gamma \in [0, \sqrt{0.1}]$. Users only need to determine the value of $\lambda$ to specify the style of interpolation results.

\subsection{The impact of the boundary}
\begin{figure}[ht]
\centering
\includegraphics[width=13cm]{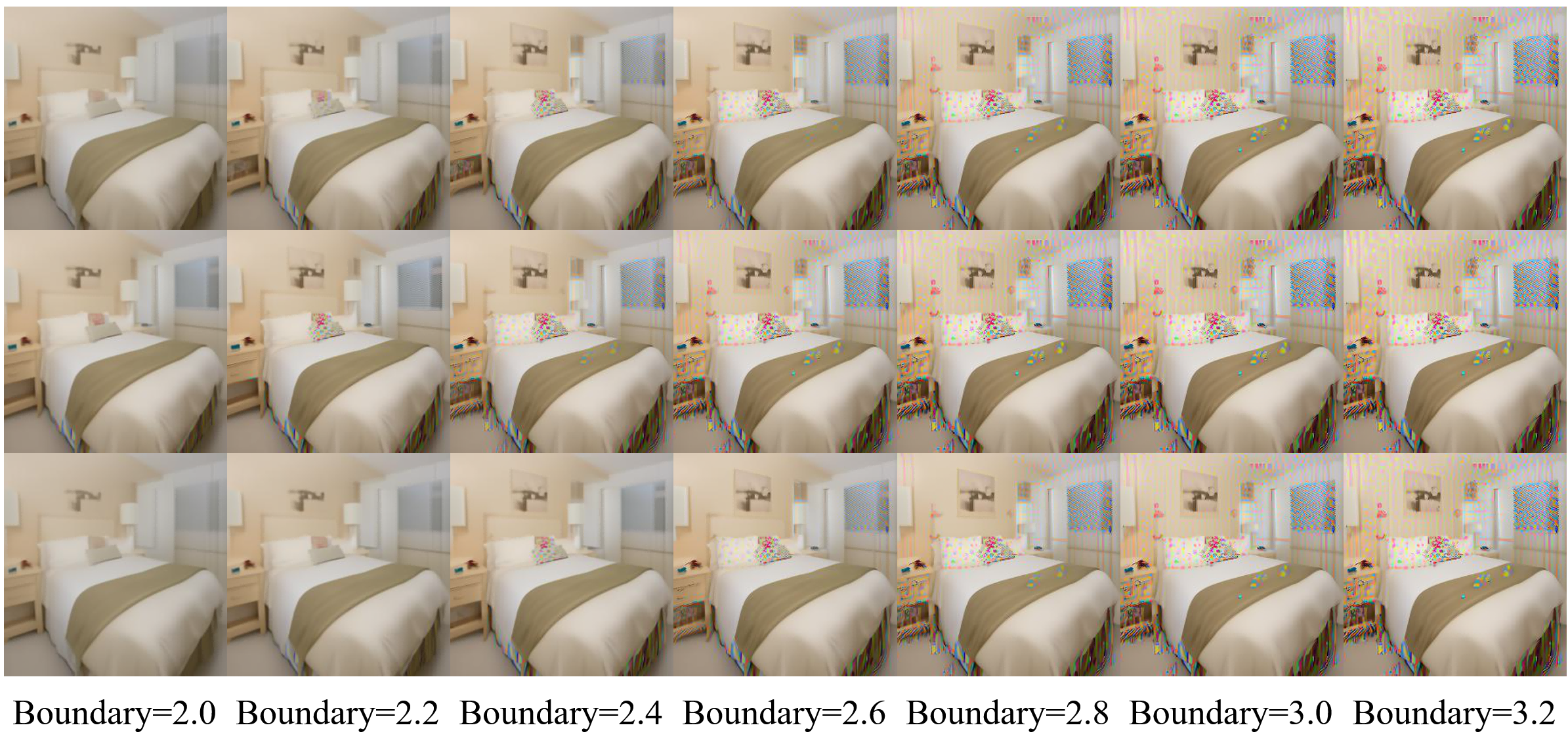}
\caption{The  impact of the  boundary. From top to bottom: controlling noise before interpolation, controlling noise after interpolation, and controlling noise both before and after interpolation. From left to right: the coefficient ratio of the noise boundary to the variance are $[2.0, 2.2, 2.4, 2.6, 2.8, 3.0, 3.2]$.}
\label{figure21}
\end{figure}

We  compared three boundary control methods: control before interpolation, control after interpolation, and control before and after interpolation, as shown in Figure \ref{figure21}. From the figure, we can observe that all three methods introduced a similar level of blurriness, indicating a loss of image information, and applying constraints to noise both before and after interpolation is more effective in reducing artifacts.

\subsection{Interpolation of images with models trained on lsun bedroom-256 }
\begin{figure}[ht]
\centering
\includegraphics[width=13cm]{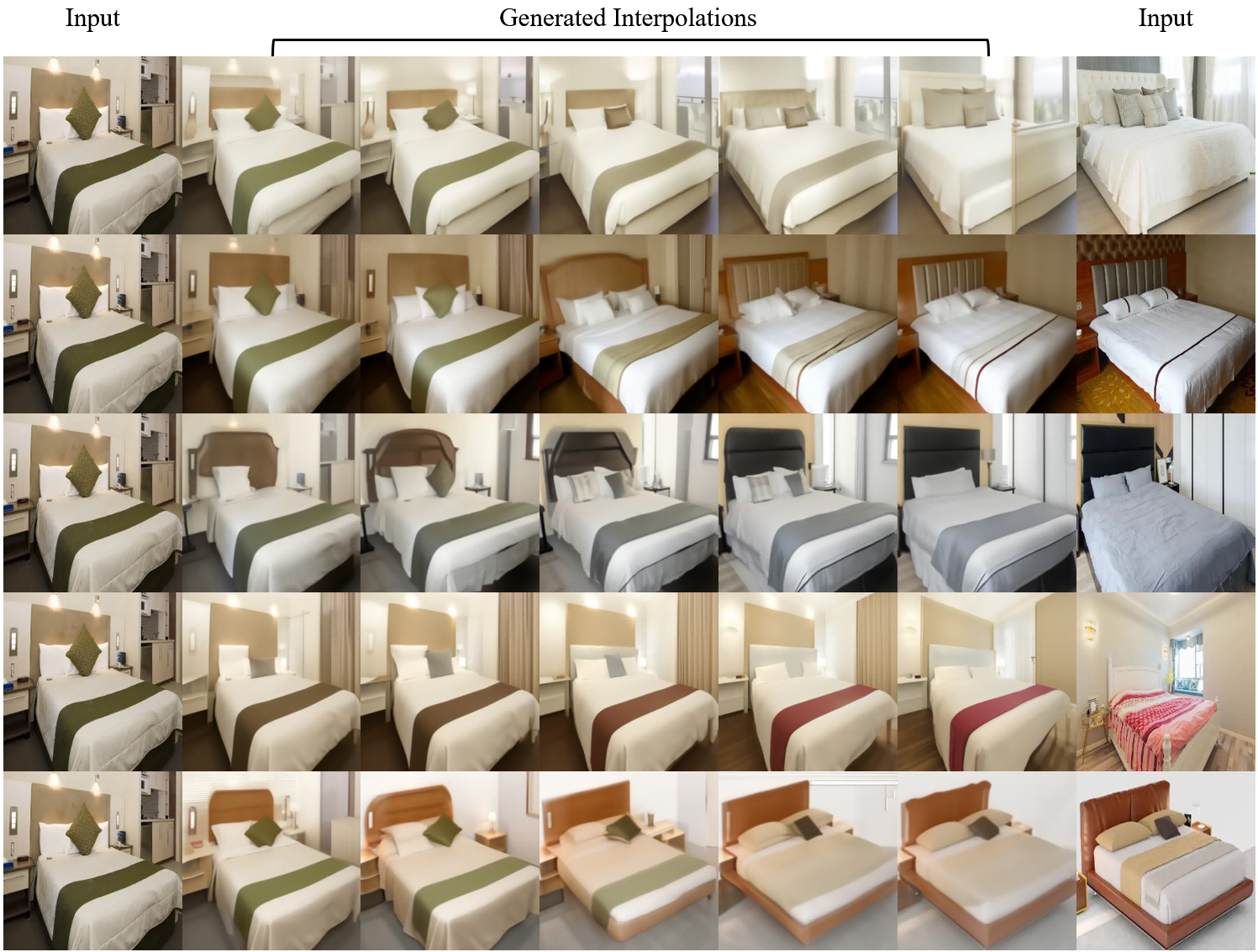}
\caption{Interpolation with natural images. By modifying $\lambda$, our method can generate interpolated results with different image styles.}
\label{figure28}
\end{figure}

We searched online for images of bedroom and used a diffusion model trained exclusively on LSUN Bedroom-256 images for interpolation. We gradually increased the value of $\lambda$ to modify the style of the interpolation images  and ensuring that other parameters are within the specified range. The results are shown in Figure\ref{figure28}.

\subsection{Interpolation of images with models trained on lsun cat-256}
\begin{figure}[ht]
\centering
\includegraphics[width=13cm]{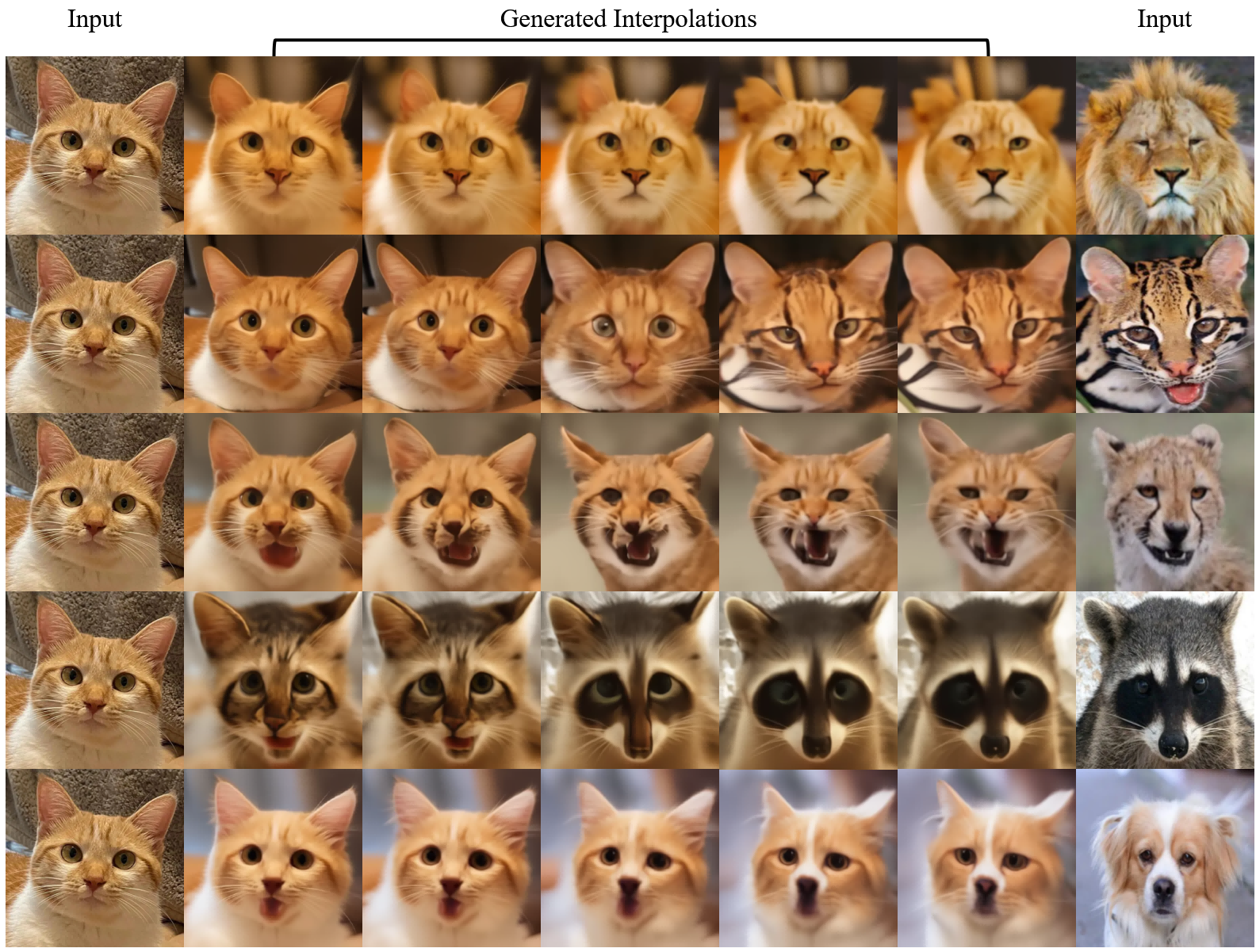}
\caption{Interpolation with natural images. By modifying $\lambda$, our method can generate interpolated results with different image styles.}
\label{figure27}
\end{figure}

We searched online for images of cat and used a diffusion model trained exclusively on LSUN Cat-256 images for interpolation. We gradually increased the value of $\lambda$ to modify the style of the interpolation images  and ensuring that other parameters are within the specified range. The results are shown in Figure\ref{figure27}.

\subsection{Comparison of results from different methods}

We compared our method with spherical linear interpolation and the method of introducing noise for interpolation, using models separately trained on LSUN Cat-256 and LSUN Bedroom-256 datasets. The results are displayed in Figure \ref{figure23} and Figure \ref{figure26}. From the figures, it's clear that spherical linear interpolation introduces significant artifacts, while introducing noise for interpolation introduces extra information. In contrast, our method not only preserves the original image informations but also enhances the   quality of images.

\begin{figure}[ht]

\begin{center}
\includegraphics[width=13.5cm]{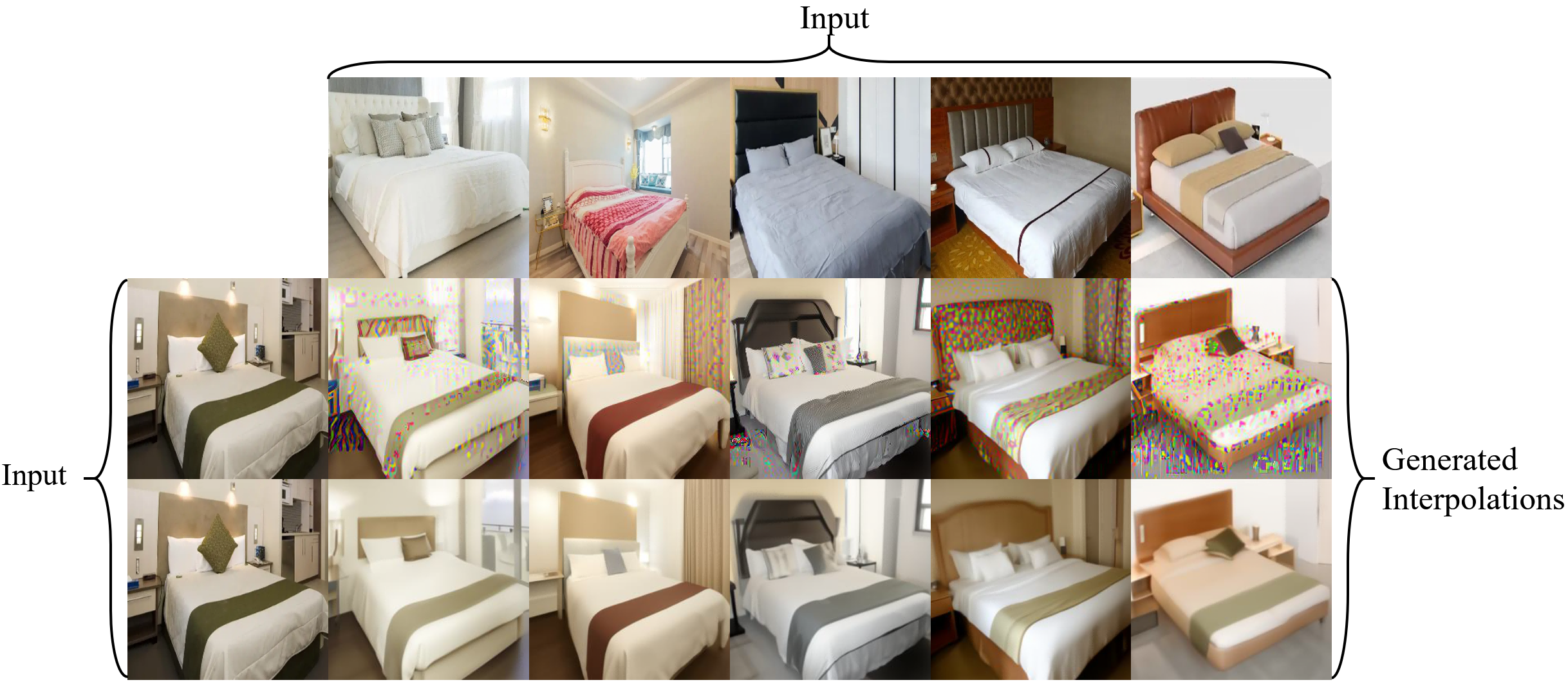}
\end{center}
\caption{Comparison between spherical linear  interpolation and our method. The top and leftmost images represent the original images. The second row displays the results of spherical linear interpolation, while the third row shows the outcomes of our method.}
\label{figure23}
\end{figure}

\begin{figure}[ht]
\begin{center}
\includegraphics[width=13.5cm]{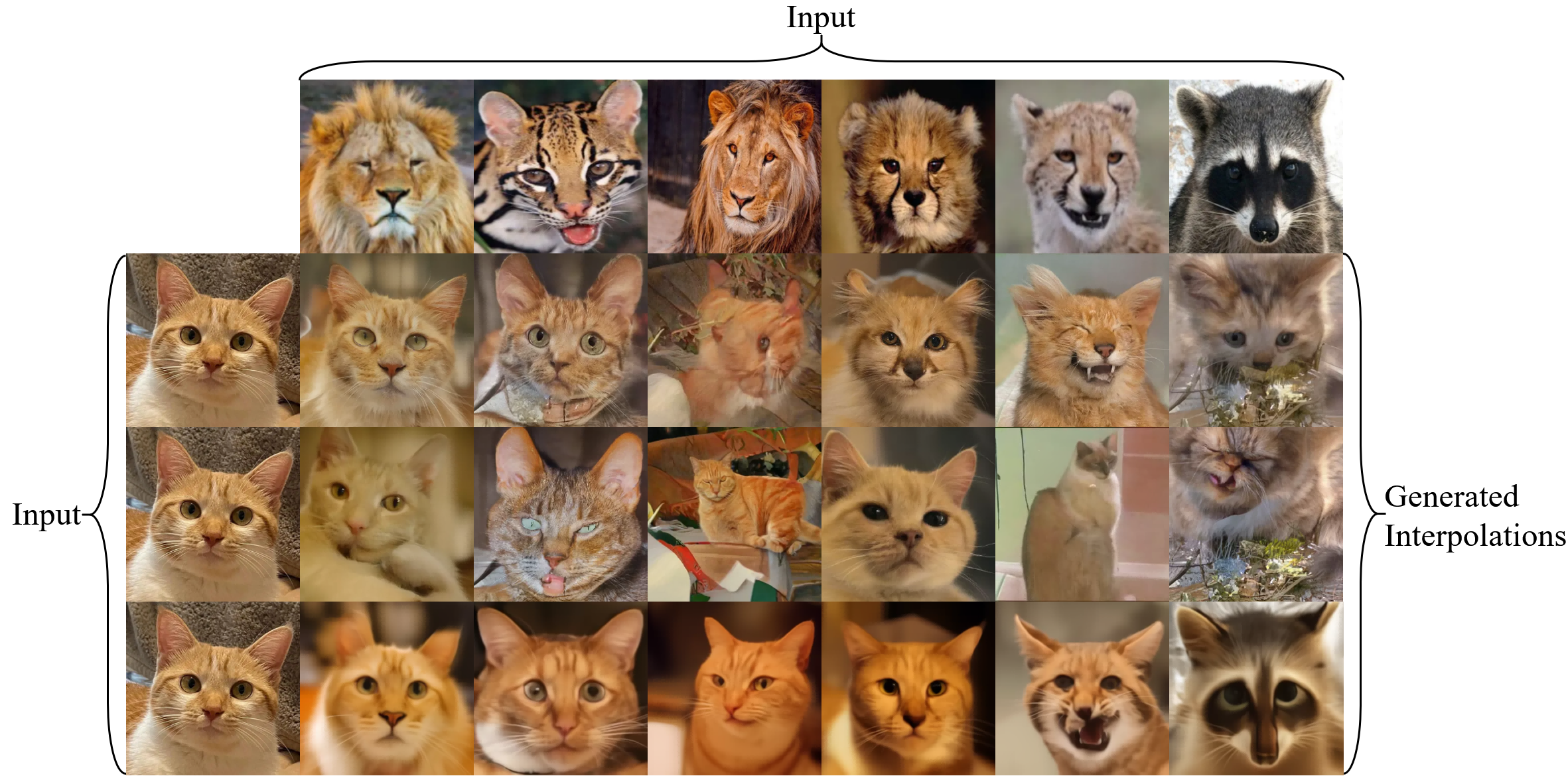}
\end{center}
\caption{Comparison between the
method of introducing noise for interpolation  and our method. The top and leftmost images show the original images. The second and third rows display the interpolation results obtained by directly introducing noise. The fourth row illustrates the outcomes of our method.}
\label{figure26}
\end{figure}

\section{Experiments on Stable Diffusion\label{results of stable diffusion}}
\subsection{Stable diffusion}

We extended our experiments on Stable Diffusion and compared it with other methods. Due to the differences in the form of $\vmu(\vx_t, t)$ and $\sigma(t)$ in Stable Diffusion, there have been significant changes in its latent variables. However, the challenges faced by different interpolation methods are similar:  spherical linear interpolation  produces images with noticeable defects (Figure \ref{figure48} - Figure \ref{figure52}), while the method of introducing noise for interpolation  introduces additional information (Figure \ref{figure41} - Figure \ref{figure47}). Due to the highly unstructured latent space of the Stable Diffusion, it becomes challenging to interpolate between two image samples, as depicted in Figure~\ref{figure59}.Therefore, we consider interpolating latent variables in the noisy image space, here we chose to interpolate the images when $t=700$.

\begin{figure}[ht]
\centering
\includegraphics[width=13.5cm]{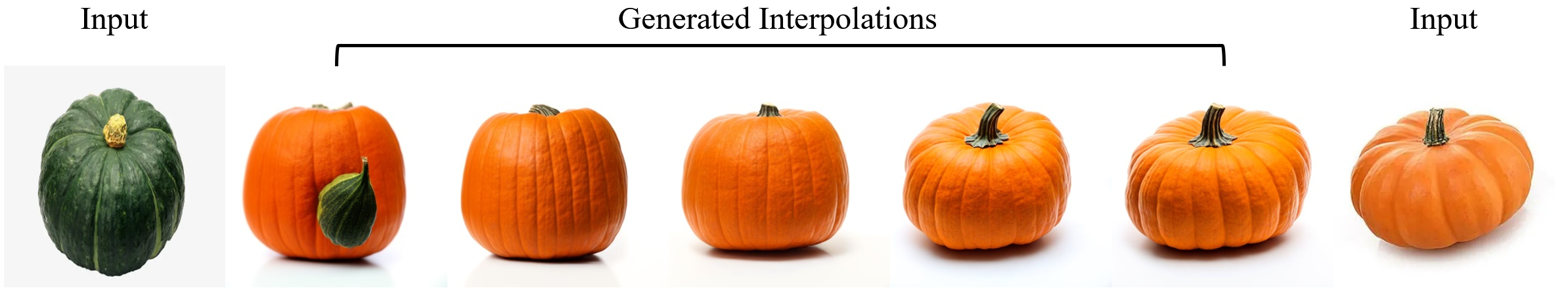}
\caption{Spherical linear interpolation results when the images are encoded into the noise space.}
\label{figure59}
\end{figure}

\subsection{Experimental results}

We collected various images online to interpolate on Stable Diffusion. The results are shown below. To facilitate comparison with  spherical linear interpolation results, we maintain the condition $\alpha/\beta=\sin\left(\frac{\pi}{2} \cdot \lambda \right)/\cos\left(\frac{\pi}{2} \cdot \lambda\right)$, and ensure that $\sqrt{\alpha^2+\beta^2+\gamma^2}=1$ when computing $\alpha$ and $\beta$. Additionally,  the boundary is set to $2.0$, $\gamma \in [0, \sqrt{0.1}]$, and $\mu=1.2*\alpha/(\alpha+\beta)$, $\nu=1.2*\beta/(\alpha+\beta)$. Users  need to determine the value of $\lambda$ to modify the style of interpolation results.

\begin{figure}[ht]
\centering
\includegraphics[width=13.5cm]{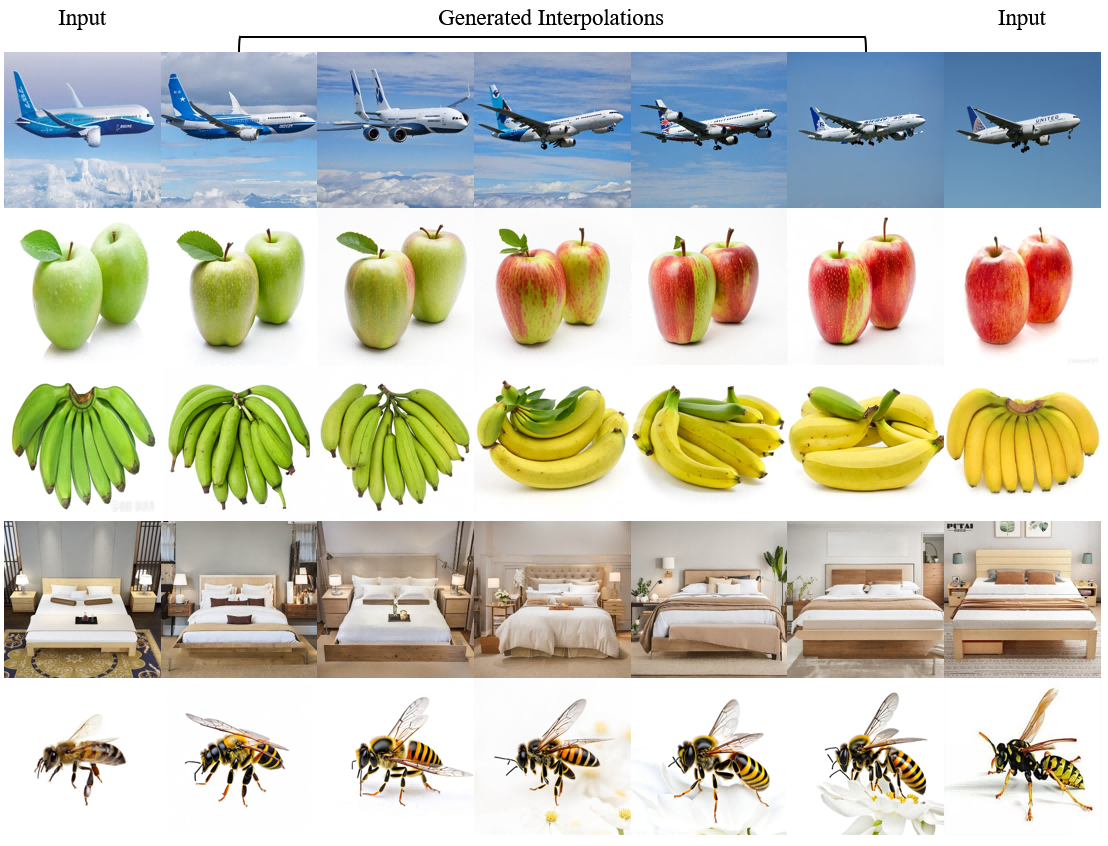}
\caption{Interpolation results with Stable Diffusion (Introducing Noise) (1/5). }
\label{figure41}
\end{figure}

\begin{figure}[ht]
\centering
\includegraphics[width=13.5cm]{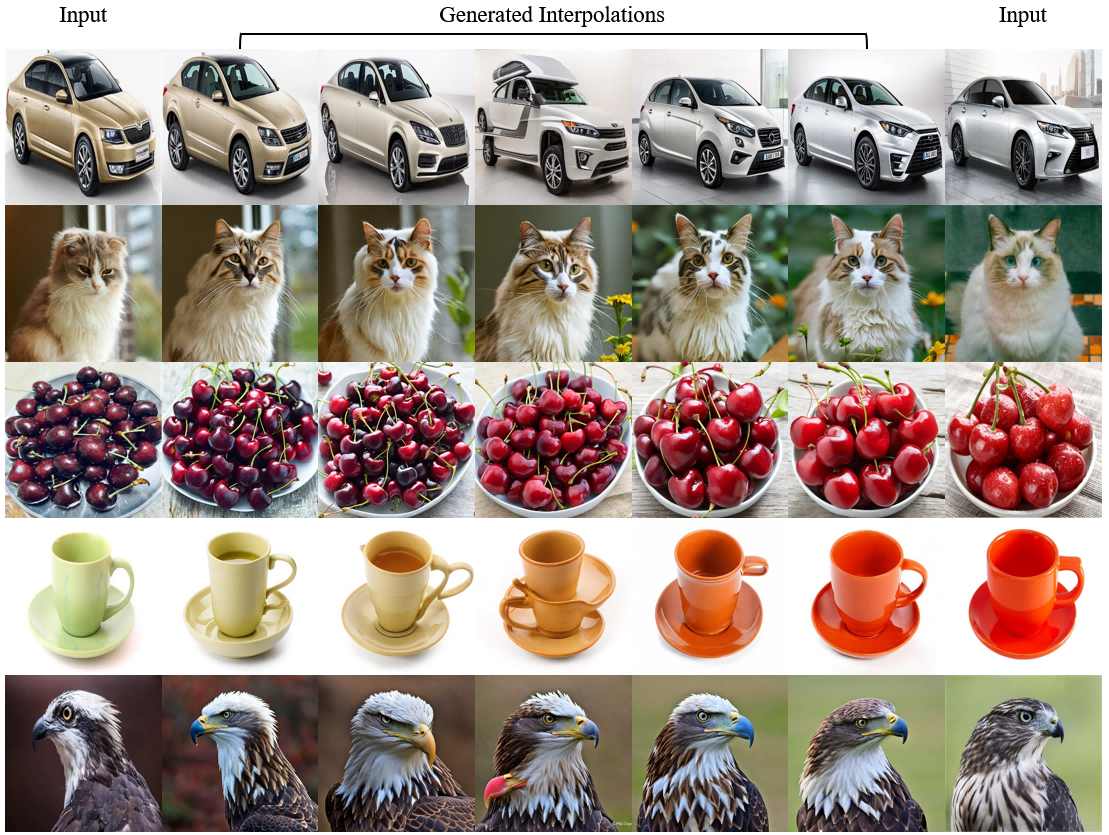}
\caption{Interpolation results with Stable Diffusion (Introducing Noise) (2/5). }
\label{figure42}
\end{figure}

\begin{figure}[ht]
\centering
\includegraphics[width=13.5cm]{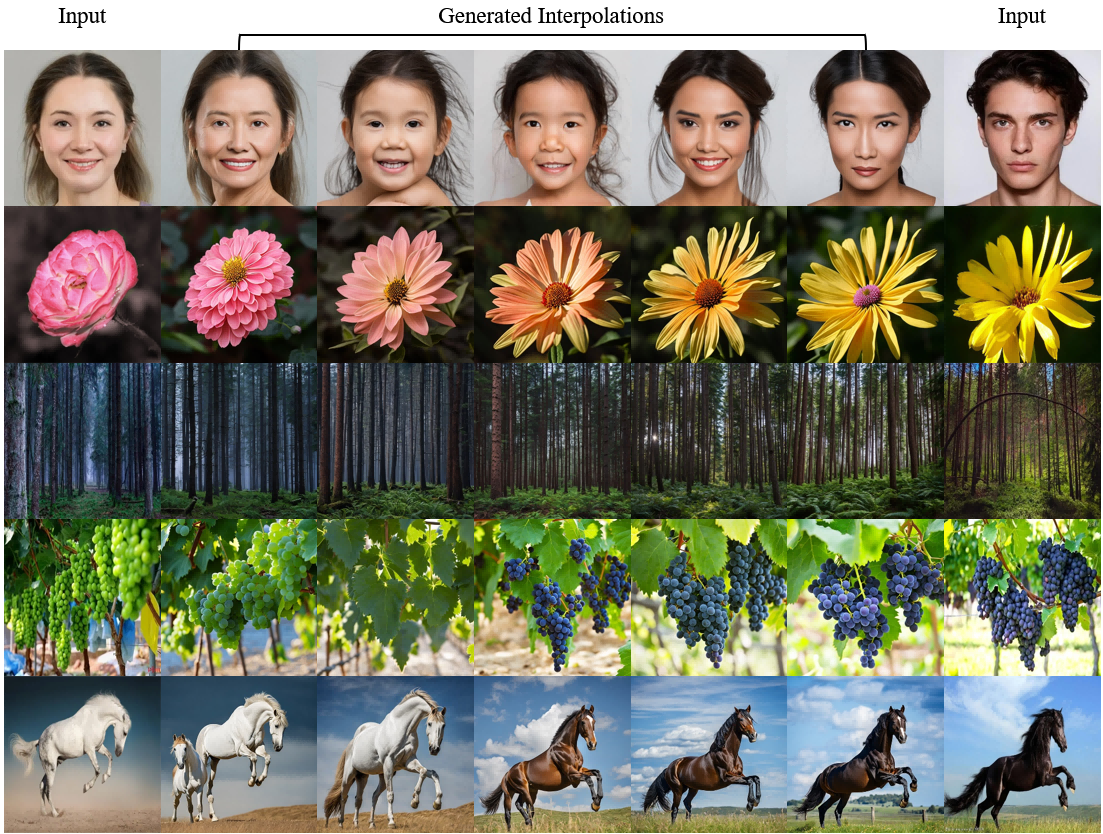}
\caption{Interpolation results with Stable Diffusion (Introducing Noise) (3/5). }
\label{figure43}
\end{figure}

\begin{figure}[ht]
\centering
\includegraphics[width=13.5cm]{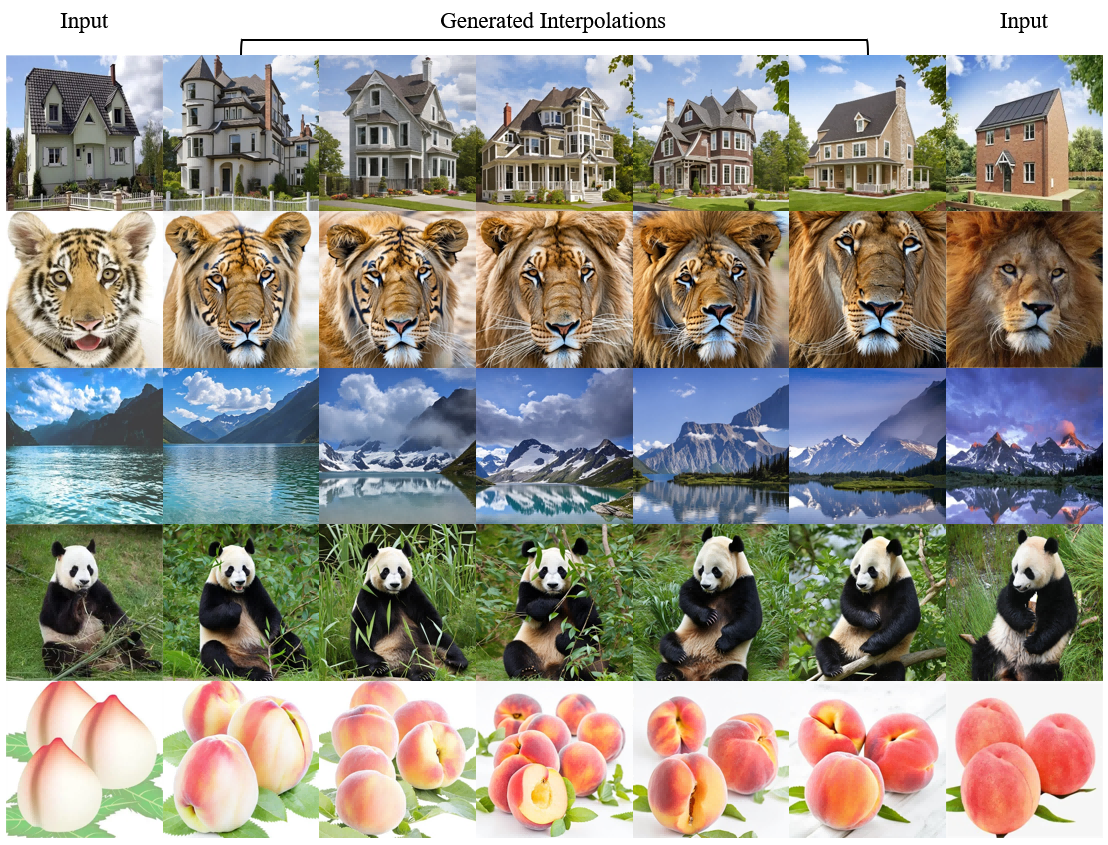}
\caption{Interpolation results with Stable Diffusion (Introducing Noise) (4/5).  }
\label{figure46}
\end{figure}

\begin{figure}[ht]
\centering
\includegraphics[width=13.5cm]{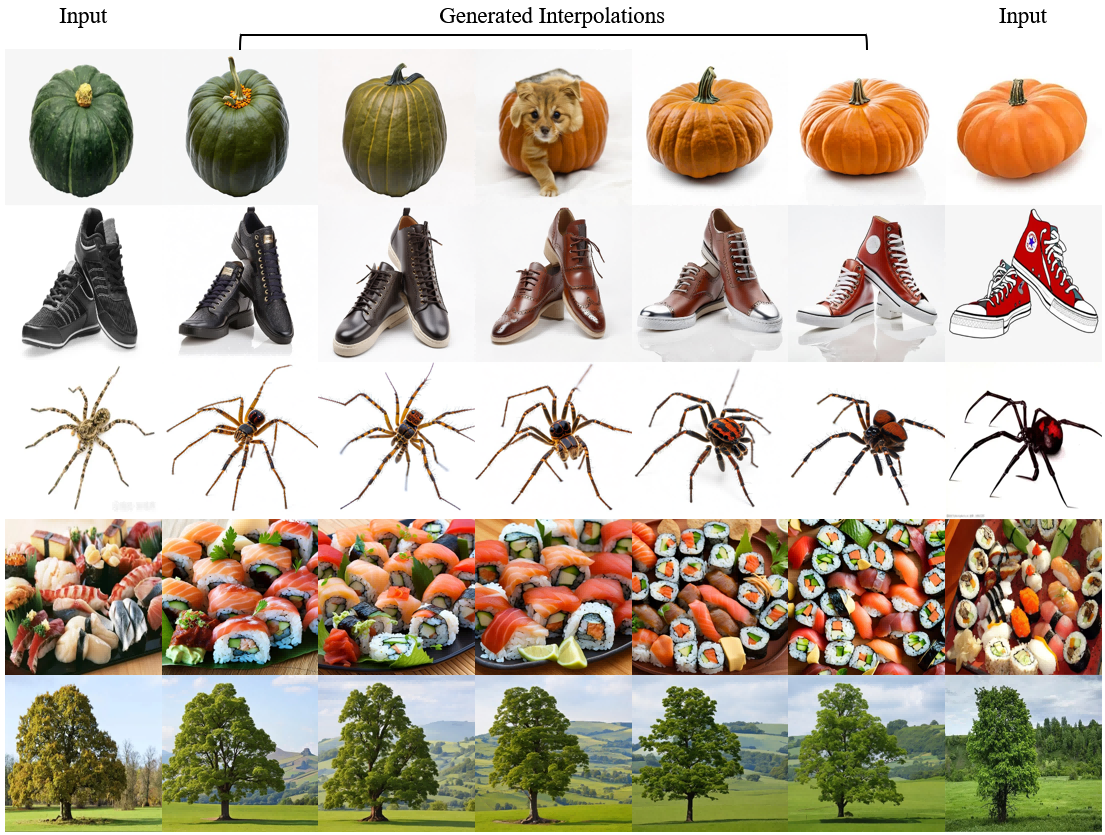}
\caption{Interpolation results with Stable Diffusion (Introducing Noise) (5/5). }
\label{figure47}
\end{figure}

\begin{figure}[ht]
\centering
\includegraphics[width=13.5cm]{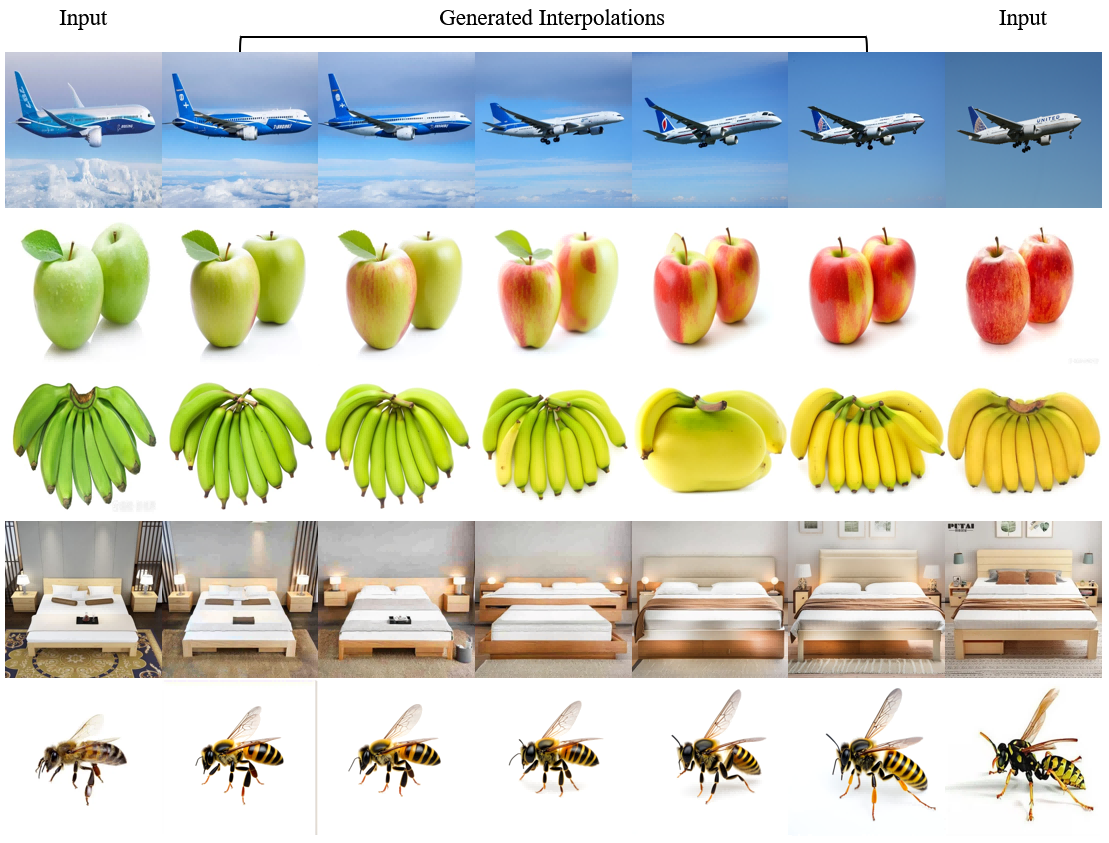}
\caption{Interpolation results with Stable Diffusion  (Spherical Linear Interpolation) (1/5). }
\label{figure48}
\end{figure}

\begin{figure}[ht]
\centering
\includegraphics[width=13.5cm]{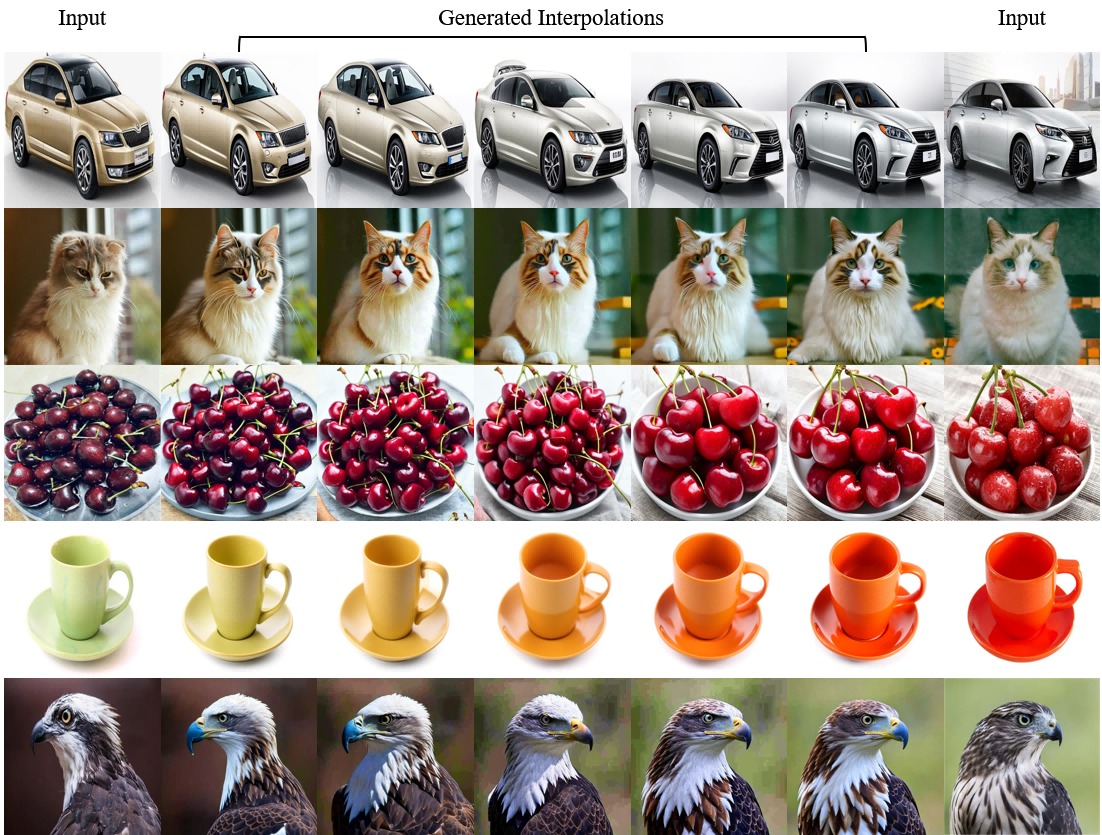}
\caption{Interpolation results with Stable Diffusion  (Spherical Linear Interpolation) (2/5). }
\label{figure49}
\end{figure}

\begin{figure}[ht]
\centering
\includegraphics[width=13.5cm]{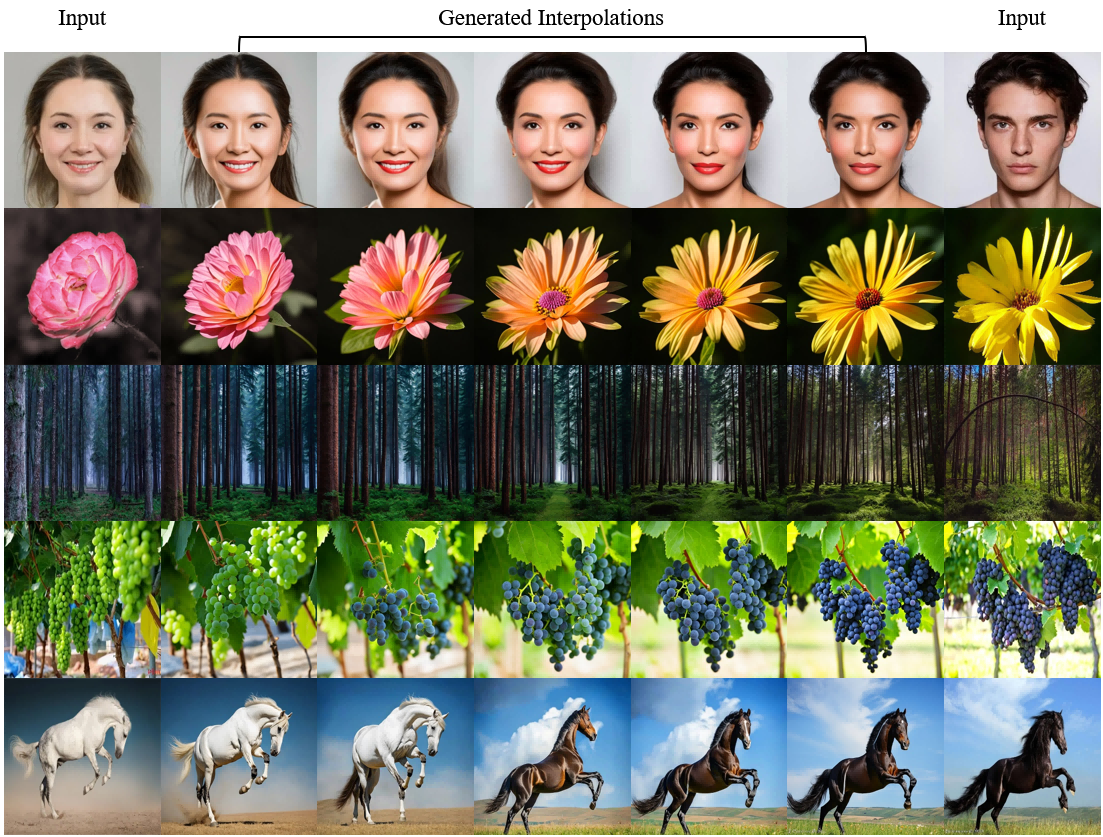}
\caption{Interpolation results with Stable Diffusion  (Spherical Linear Interpolation) (3/5). }
\label{figure50}
\end{figure}

\begin{figure}[ht]
\centering
\includegraphics[width=13.5cm]{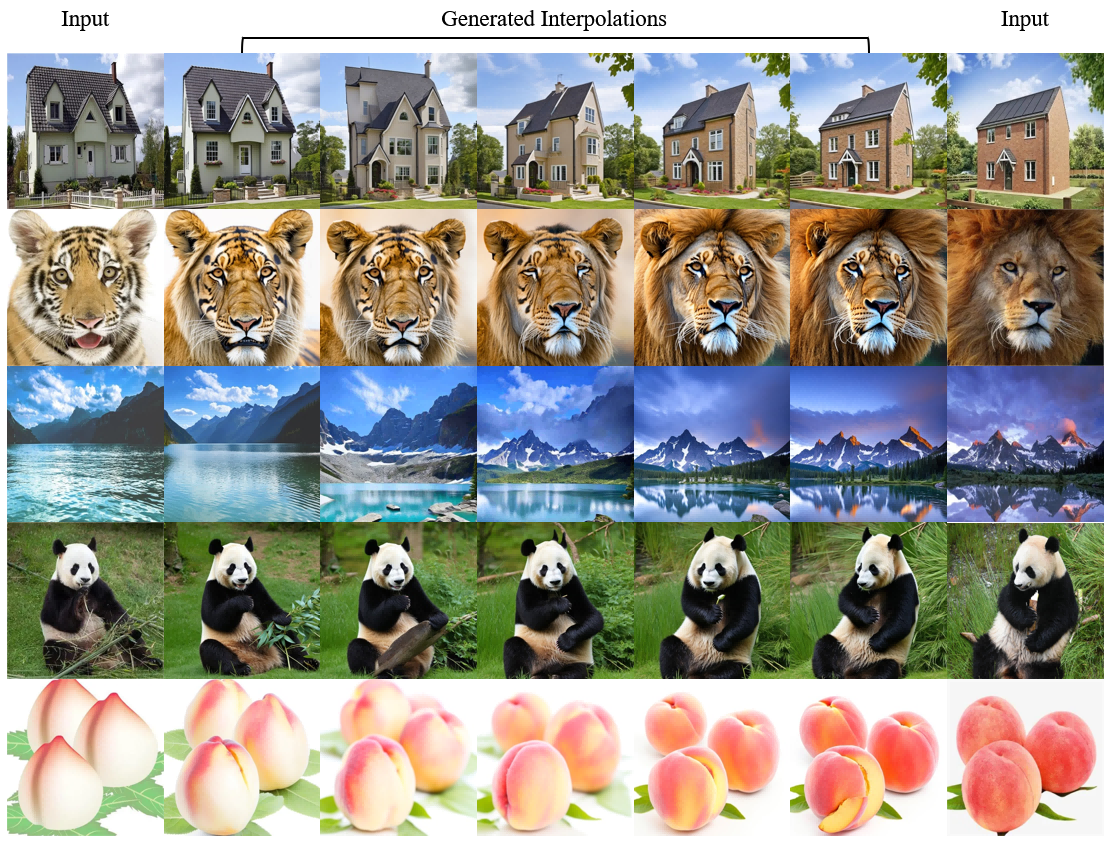}
\caption{Interpolation results with Stable Diffusion  (Spherical Linear Interpolation) (4/5). }
\label{figure51}
\end{figure}

\begin{figure}[ht]
\centering
\includegraphics[width=13.5cm]{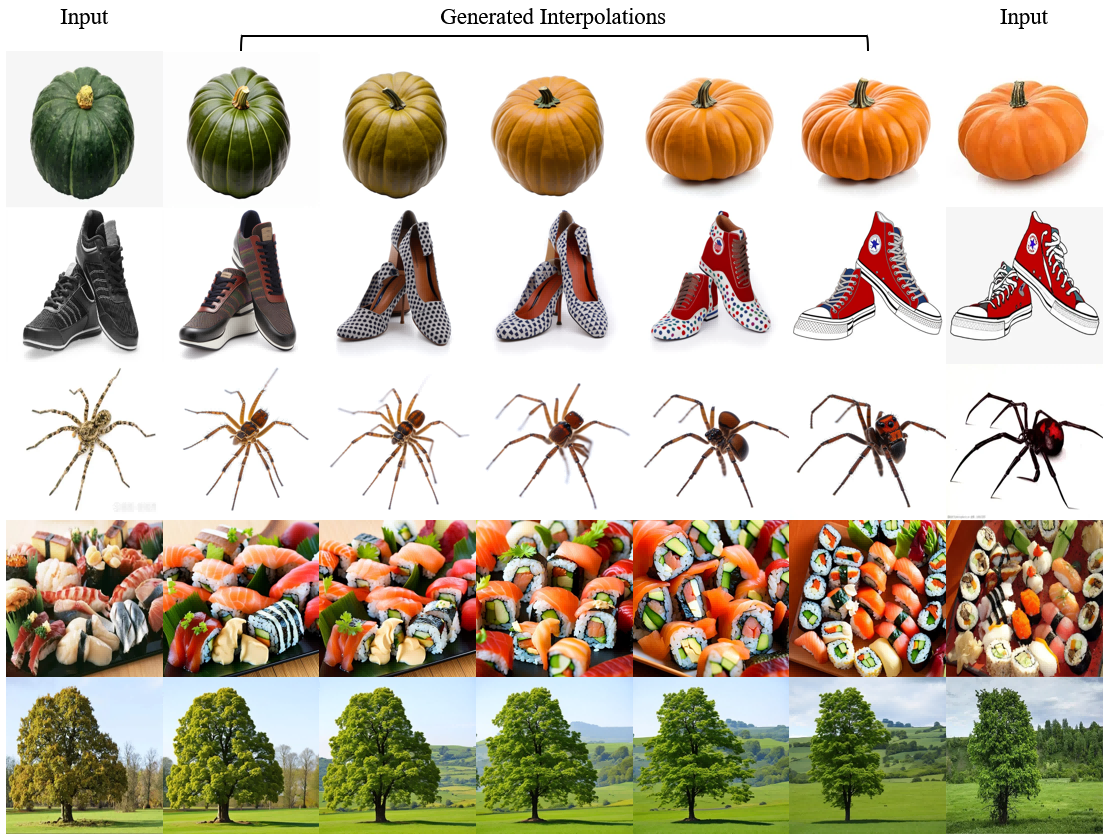}
\caption{Interpolation results with Stable Diffusion  (Spherical Linear Interpolation) (5/5). }
\label{figure52}
\end{figure}

\begin{figure}[ht]
\centering
\includegraphics[width=13.5cm]{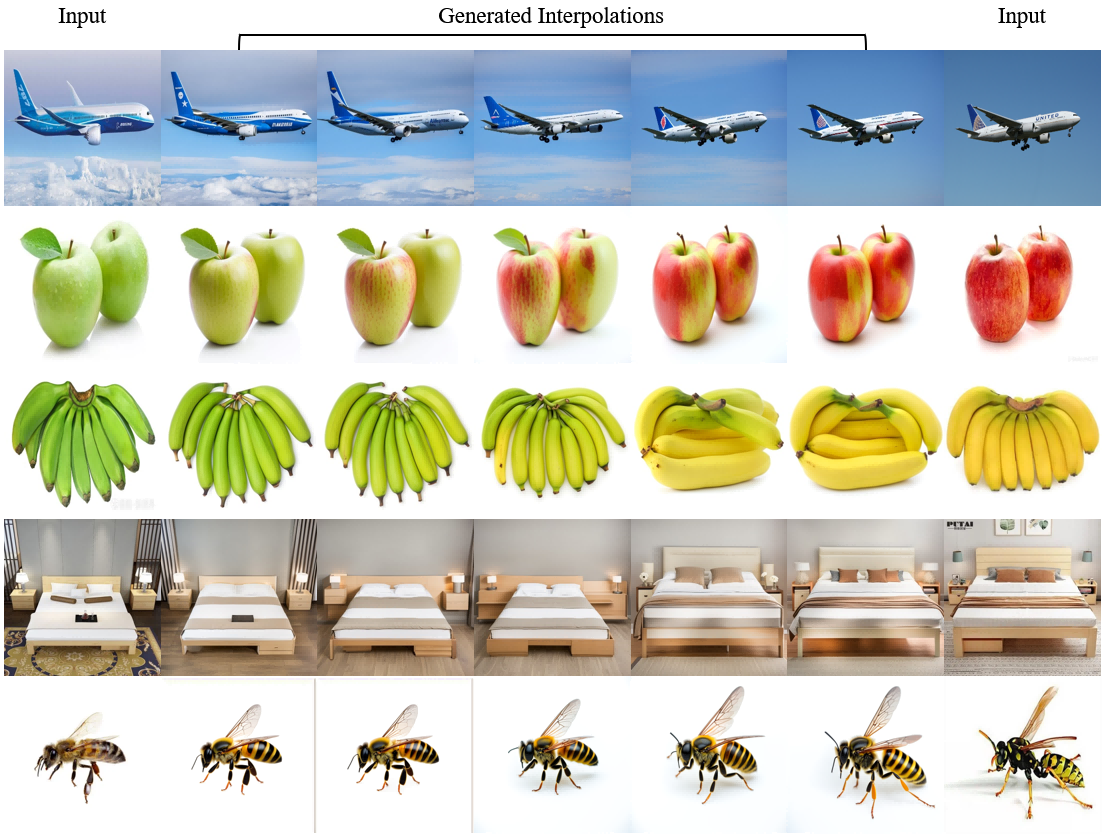}
\caption{Interpolation results with Stable Diffusion (NoiseDiffusion) (1/5) .}
\label{figure53}
\end{figure}

\begin{figure}[ht]
\centering
\includegraphics[width=13.5cm]{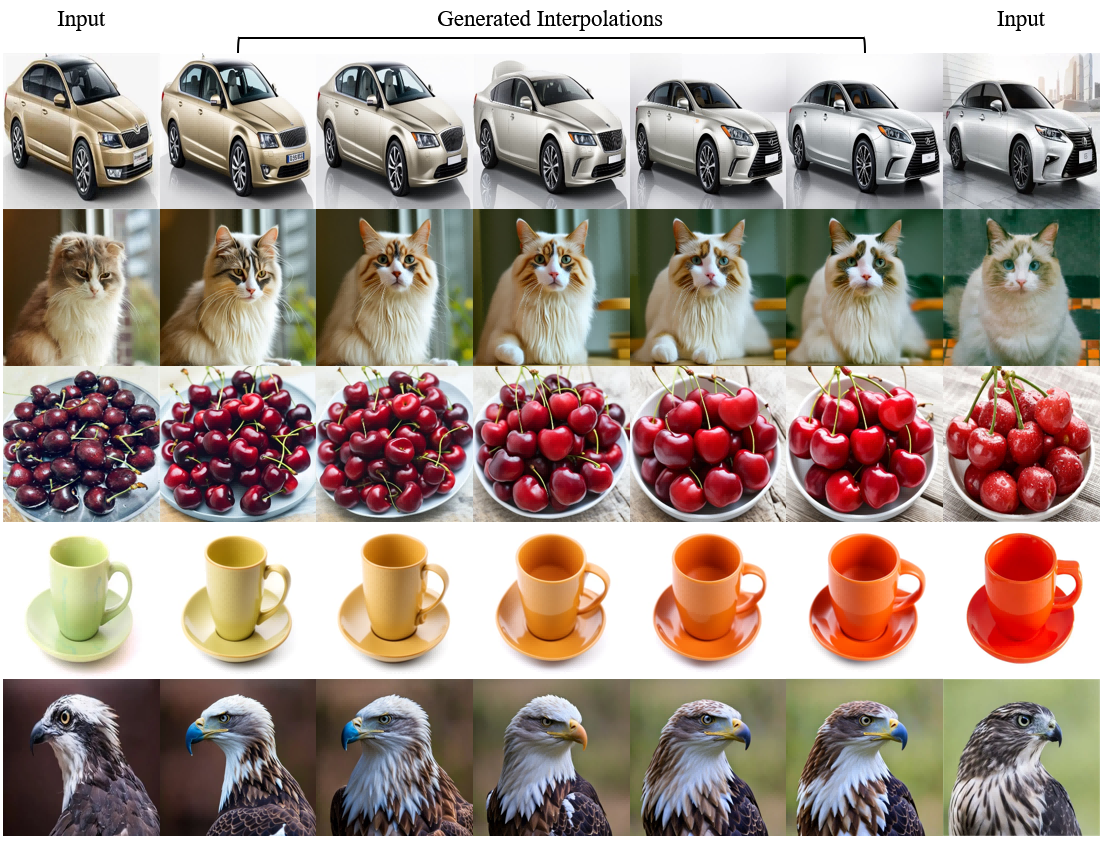}
\caption{Interpolation results with Stable Diffusion (NoiseDiffusion) (2/5) .}
\label{figure54}
\end{figure}

\begin{figure}[ht]
\centering
\includegraphics[width=13.5cm]{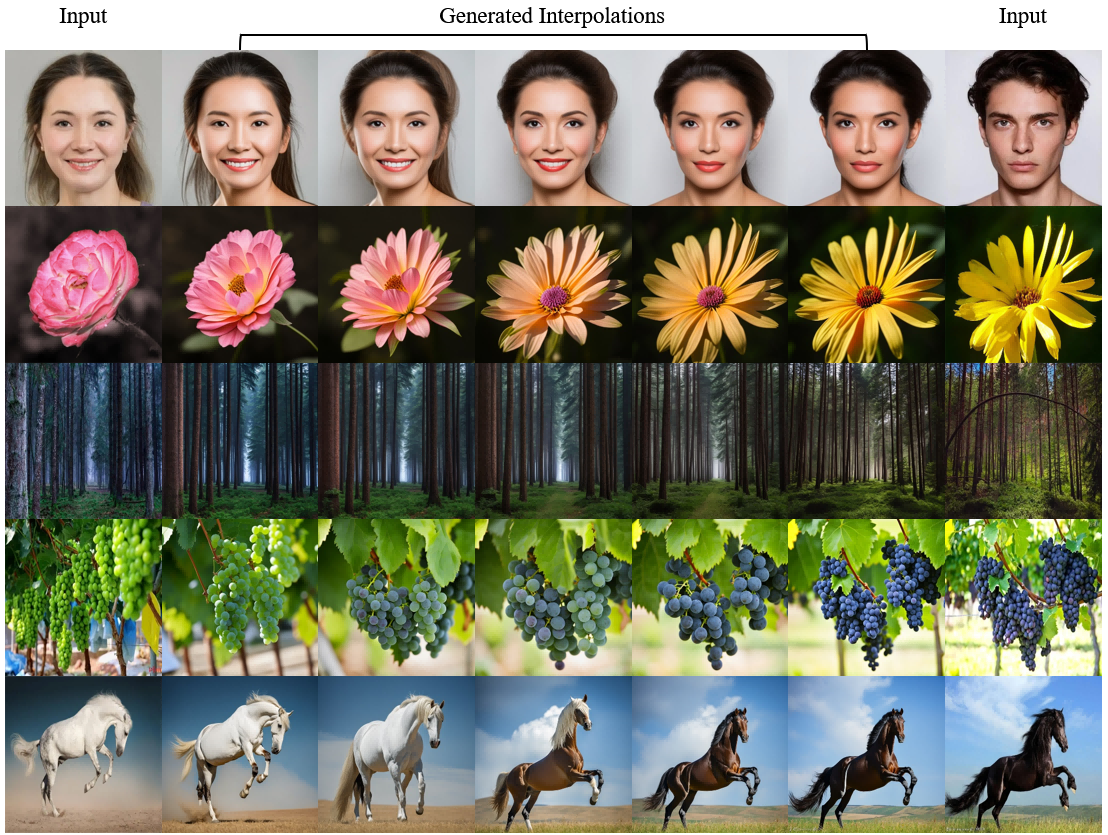}
\caption{Interpolation results with Stable Diffusion (NoiseDiffusion) (3/5) .}
\label{figure55}
\end{figure}

\begin{figure}[ht]
\centering
\includegraphics[width=13.5cm]{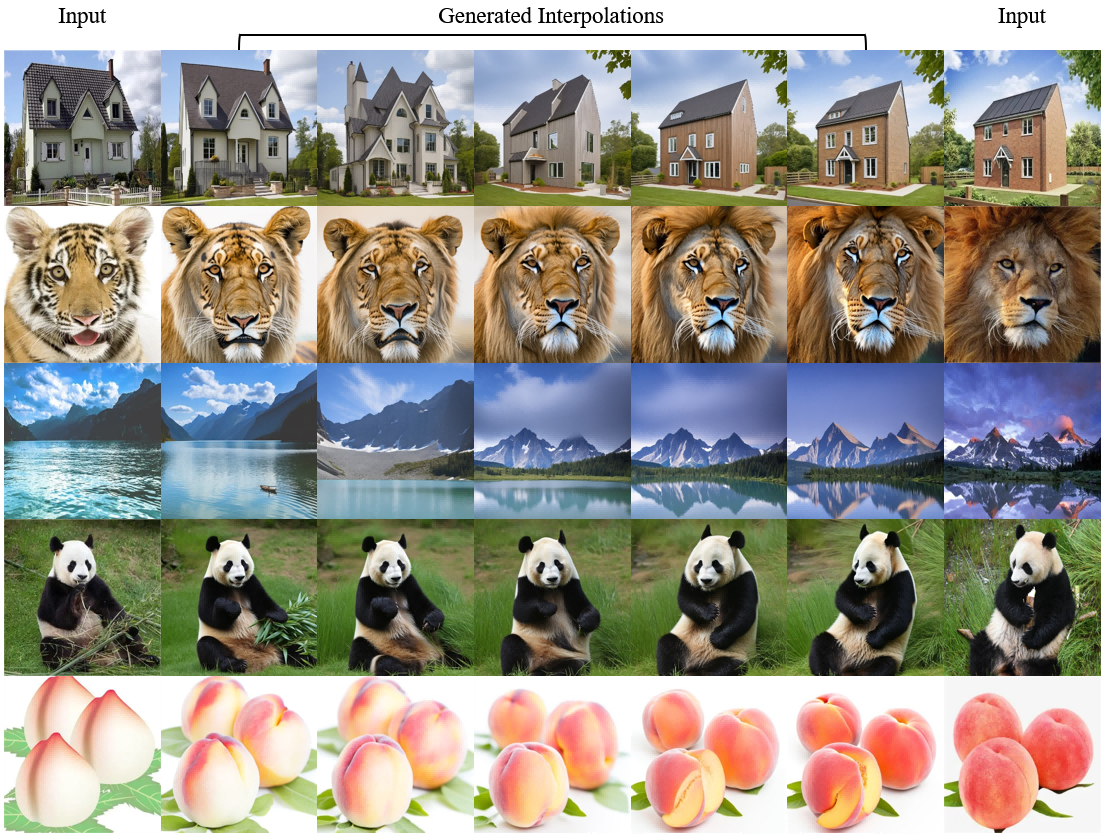}
\caption{Interpolation results with Stable Diffusion (NoiseDiffusion) (4/5) .}
\label{figure56}
\end{figure}

\begin{figure}[ht]
\centering
\includegraphics[width=13.5cm]{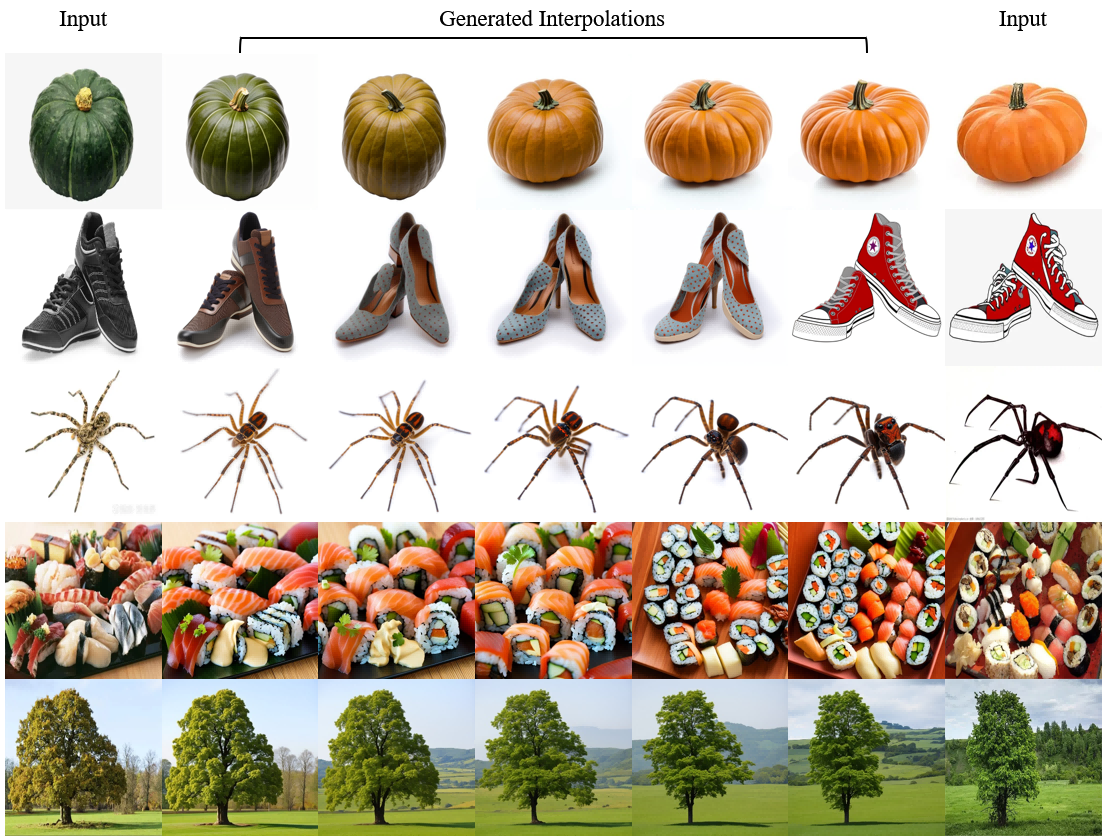}
\caption{Interpolation results with Stable Diffusion (NoiseDiffusion) (5/5).} 
\label{figure57}
\end{figure}

\end{document}